\newcommand{\removelatexerror}{\let\@latex@error\@gobble}
\newtheorem{theorem}{Theorem}
\newtheorem{lemma}{Lemma}
\newtheorem{problem}{Problem}
\newtheorem{definition}{Definition}
\newtheorem{assumption}{Assumption}
\newtheorem{remark}{Remark}
\begin{document}
	
	\title{\Large \bf A Learning Based Optimal Human Robot Collaboration with Linear Temporal Logic Constraints}
	
	\author{Bo Wu, Bin Hu and Hai Lin	
		\thanks{This work was supported by the National Science Foundation (NSF-CNS-1446288  and NSF- ECCS-1253488). }	
		\thanks{
			Bin Hu is with New Mexico State University, Las Cruces, NM, 88003, USA. {\tt\small binhu.complicated@gmail.com}} 
		\thanks{Bo Wu and Hai Lin are with University of Notre Dame, Notre Dame, IN, 46556, USA. {\tt\small bwu3@nd.edu, hlin1@nd.edu}}}
	\maketitle
	
	\begin{abstract}
		This paper considers an optimal task allocation problem  for human robot collaboration in human robot systems with persistent tasks. Such human robot systems consist of human operators and intelligent robots collaborating with each other to accomplish complex tasks that cannot be done by either part alone. The system objective is to maximize the probability of successfully executing
		persistent tasks that are formulated as linear temporal logic specifications and minimize the average cost between consecutive visits of a particular proposition. This paper proposes to model the human robot collaboration under a framework with the composition of multiple Markov Decision Process (MDP) with possibly unknown transition probabilities, which characterizes how human cognitive states, such as human trust and fatigue, stochastically change with the robot performance. Under the unknown MDP models, an algorithm is developed to learn the model and obtain an optimal task allocation policy that minimizes the expected average cost for each task cycle and maximizes the probability of satisfying linear temporal logic constraints. Moreover, this paper shows that the difference between the optimal policy based on the learned model and that based on the underlying ground truth model can be bounded by arbitrarily small  constant and large confidence level with sufficient samples. The case study of an assembly process demonstrates the effectiveness and benefits of our proposed learning based human robot collaboration.
		
	\end{abstract}
	\renewcommand{\abstractname}{Note to Practitioners}
 
		\begin{abstract}
This paper is motivated by the task allocation problem for a class of human robot collaboration systems. In such systems, the most efficient way to achieve a given task is for the human and the robot to smartly collaborate with each other. Our objective is to dynamically assign tasks to the human and the robot based on human cognitive and physiologic states as well as the robot's performance, so that the given task can be accomplished with the optimal probability and the minimized average cost to finish one round of the operation. To this end, we propose a mathematical model to capture the dynamic evolution of human cognitive and physiologic states, such as trust and fatigue. Such model in practice could be unknown initially, so we develop an efficient learning based algorithm to learn the model with predefined accuracy. Then we show that the performance of the optimal policy obtained on the learned model is sufficiently close to the optimal policy on the underlying ground truth model. Our proposed methods are applicable to manufacturing assembly process, semi-autonomous driving and other human robot collaboration systems that require optimal allocation of the tasks between human and robot. The limitation of the proposed approach is the assumption of the perfect real time observation of the human states. The authors believe that such limitation can be addressed with the recent advances in the cognitive science research. 
			
		\end{abstract}
		
\begin{IEEEkeywords}
Formal methods, human robot collaboration, temporal logic
\end{IEEEkeywords}
	\IEEEpeerreviewmaketitle
	
\section{Introduction}
\label{sec:intro}

\subsection{Background and Motivation}
Human-robot collaboration consists of intelligent robots and human operators collaborating with each other to accomplish complex tasks that are less efficient or cannot be completed by either human or robots alone. Human-robot collaboration has become an important and even necessary part to ensure safe and efficient operations for many safety-critical systems, to name a few, such as modern manufacturing processes \cite{sadrfaridpour2016modeling,chen2014optimal}, medical and health-care systems\cite{broadbent2009acceptance,okamura2010medical}, teleoperation \cite{lin2015experiments} and semi-autonomous vehicle systems \cite{seshia2015formal}. 
Thus, a reliable and efficient optimal control framework is important, even necessary especially for systems with safety-critical applications \cite{zanchettin2016safety}. 

It is, however, fairly challenging to ensure desirable performance for human robot collaboration due to the complex interactions between human and robot. The complexity lies in (1) the inherent stochastic uncertainties caused by a variety of human characteristics and external environments (2) the difficulty of obtaining a priori knowledge about how human interacts with robot due to the unanticipated nature of human beings. To address these challenges, this paper proposes a learning based human-robot collaboration framework to ensure optimal system performance for a manufacturing task allocation system with linear temporal logic constraints (LTL) \cite{baier2008principles}, a rich and expressive specification language to specify desired properties that's close to human natural language.  
\subsection{Relevant Work}

Human-robot collaboration has been a rapidly growing research area that has found its roles in many applications\cite{bauer2008human}. It is thus, beyond the scope of this paper to conduct an exhaustive literature review on this popular topic. However, there are not many research activities that examine the optimal performance for human robot collaboration under LTL constraints. This section will focus on the discussions of the relationship and differences between the proposed work in this paper and the relevant research work.

Among many factors affecting human robot interaction, human factors, such as trust and fatigue, have been long considered as overarching concerns in human robot systems that significantly affects system performance, especially as it is related to safety-critical applications \cite{hancock2011meta}. Prior work in \cite{desai2012effects, robinette2015effect} showed that the evolution of both human trust and fatigue are dynamical processes. The human trust toward the robot is highly dependent on the automation performance. 
These observations motivate research efforts on the development of quantitative models for human trust
and fatigue to provide appropriate real time predictions \cite{lee1992trust,desai2013impact} and control of human trust levels \cite{sadrfaridpour2016modeling}. In this paper, we explicitly consider the impact of the human trust and fatigue on the human robot interaction. We assume the complete observability of the trust and fatigue level in real time, which can be met by using the techniques mentioned in \cite{wang2015mutual,Fu-RSS-14}.

To address the challenge of modeling the unknown uncertainties in human robot interaction, this paper proposes a composition framework of multiple Markov Decision Process (MDP) models with possibly unknown transition probabilities.  Our proposed framework can be viewed as a stochastic generalization of the conventional deterministic models proposed in \cite{lee1992trust,  sadrfaridpour2016modeling}. 
Under the MDP models for human robot interactions,
the second challenge is to ensure the optimal performance
when conducting persistent
tasks, such as ``visiting regions A infinite often to pick up products" or ``repeating the steps of assembling the car engine infinite often", which are important and widely encountered situations in a variety of industrial systems. The desired performance metric for those
persistent tasks is to maximize the probability of successfully executing those persistent tasks while optimizing some cost metric.  Such performance specification is often
formulated as an optimal control problem of MDP with
temporal logic constraints. Wolff et. al \cite{wolff2012optimal} developed a method to automatically generate the optimal control policy subject to LTL specifications and optimizing the weighted average cost function on a non-probabilistic model. The controller synthesis framework to optimize average cost per operation cycle, a natural performance metric for persistent tasks with temporal logic constraints in MDPs was considered in \cite{ding2014optimal} and \cite{svorenova2013optimal}. All the above results assume that the model is precisely known. However, in many practical applications, such assumption may not be true and the model has to be learned through interaction with the environment.

Motivated by the need of considering unknown  models in human robot collaboration , this paper studies a learning-based optimal task allocation problem where the optimal policies can be synthesized on the learned model  to optimize the average cost per task cycle while maximizing the probability of satisfying the LTL constraints used to specify persistent tasks. The frameworks considered in \cite{sadigh2014learning,Fu-RSS-14} are the most relevant to the work considered in this paper. To address the optimal control problem under unknown MDPs, Sadigh et. al \cite{sadigh2014learning} developed a reinforcement learning based approach to synthesize control policy for MDP with unknown transition probabilities to satisfy LTL specification with optimal probability. Fu et. al \cite{Fu-RSS-14} extended the probably approximate correct (PAC)-MDP learning framework that synthesize controllers of unknown MDPs with temporal logic constraints. Both work focus on developing learning methods to maximize the probability of satisfying the LTL specification without considering the cost per task cycle metric. It is, however, unclear whether the methods in \cite{sadigh2014learning,Fu-RSS-14} can be used to achieve optimal average cost per task cycle. To our best knowledge, this paper presents the first results that address the optimization problem that achieves optimal average costs per task cycle with LTL constraints under the unknown MDP models. 

\subsection{Contributions}

The objective of this paper is to (1) develop a human robot collaboration framework that is able to model the unknown stochastic uncertainties on human robot interactions, (2) as well as a learning based algorithm to ensure optimal performance for human robot collaboration with LTL constraints under unknown human models. The main contributions of this paper are summarized as below,
\begin{itemize}
	\item This paper proposes a human robot collaboration framework with composition of multiple MDPs with unknown transition probabilities. Compared to the existing frameworks \cite{lee1992trust,desai2013impact,ji2006probabilistic,fujieSharedAutonomy}, such framework enables the characterization of stochastic uncertainties as well as their unknown features for human robot interactions as well as the use of formal design methodologies for performance guarantee.  
	\item Based on the unknown MDP models, this paper further develops a learning-based algorithm inspired by probably approximately correct method to achieve the optimality for both LTL satisfaction and average cost per task cycles. To the best of our knowledge, this is the first set of results addressing both issues of LTL constraints and average cost per task cycles under unknown MDP models. 
	\item This paper also shows that the optimal policies achieved by the proposed learning method under the unknown MDP model, can asymptotically approach the ones with the true model. The optimality gap of the proposed learning algorithm can be analytically bounded and to achieve such bound, the complexity to learn the model is polynomial in the size of the MDP, the size of automaton from the LTL specification and other quantities that either are task specific or measure the confidence level and accuracy of the learned model.
\end{itemize}

This paper extends our preliminary results in \cite{ACC2017} by considering synthesizing optimal policy under the model with unknown transition probabilities and prove its approximate optimality. 
The rest of this paper is structured as follows. Section \ref{sec:preliminaries} provides the necessary preliminaries; Section \ref{sec: problem-formulation} models and formulates the problem. The task assignment problem is solved in Section \ref{sec:main_results} and Section \ref{sec:approxi} for known and unknown model respectively. A simulation is presented in Section \ref{sec:simulation}. Section \ref{sec: conclusion} concludes the paper. 

%
%



\section{Preliminaries}\label{sec:preliminaries}
In this section, we introduce the required background knowledge about transition system, Markov Decision Process and temporal logic. 
\subsection{Transition System}
Transition system is a popular model to describe the non-probabilistic behavior of the system. For example, it can be used to describe the flow of the assembly task driven by worker's actions.
\begin{definition}\cite{baier2008principles}
	A transition system is a tuple $\mathcal{TS}=(S,A,\rightarrow,I,AP,L)$ where
	\begin{itemize}
		\item $S=\{s_0,s_1,...\}$ is a finite set of states;
		\item $A$ is a finite set of actions;
		\item $\rightarrow\subseteq S\times A\times S$ is a transition relation.
		\item $I\subseteq S$ is a set of initial states;
		\item $AP$ is a set of atomic propositions;
		\item $L:S\rightarrow2^{AP}$ is a labeling function that maps each $s\in S$ to one or several elements of $AP$
	\end{itemize}
\end{definition}
For $s\in S$ and $a\in A$, we denote $Post(s,a)=\{s'\in S|(s,a,s')\in \rightarrow\}$. Specifically, $\mathcal{TS}=(S,A,\rightarrow,I,AP,L)$ is called an action-deterministic transition system if $|I|\leq 1$ and $|Post(s,a)|\leq 1$ for all $s$ and $a$. 
\subsection{Markov Decision Process}

\begin{definition}\label{def:MDP}\cite{baier2008principles}
	An MDP with cost is a tuple $\mathcal{M}=(S,\hat{s},A,P,L,C)$ where
	\begin{itemize}
		\item $S=\{s_0,s_1,...\}$ is a finite set of states;
		\item $\hat{s}\in S$ is the initial state;
		\item $A$ is a finite set of actions;
		\item $P(s,a,s'):=Pr(s(i+1)=s'|s(i)=s,a(i)=a),~\forall i\geq0$.
		\item $L:S\rightarrow2^{AP}$ is the labeling function that maps each $s\in S$ to one or several elements of a set $AP$ of atomic propositions.
		\item $c:S\times A\rightarrow\mathbb{R}^+$ is the cost function that maps each state and action pair to a non-negative cost. The maximum cost is bounded by $R_{max}$.
	\end{itemize}
\end{definition}

For each state $s\in S$, we denote $A(s)$ as the set of available actions. From the definition it is not hard to see that the action-deterministic transition system is a special case of MDP with $P(s,a,s')=1$ for any $s\in S$ and $a\in A$ that are defined. Vice versa, if we ignore the probabilities in each transition, the MDP will become a transition system which we denote as $TS_\mathcal{M}$. 

A path $\omega$ of an MDP is a non-empty sequence of the form $\omega=s_0\xrightarrow{a_0}s_1\xrightarrow{a_1}s_2...s_i\xrightarrow{a_i}s_{i+1}...$, where each transition is enabled by an action $a_i$ such that $P(s_i,a_i,s_{i+1})>0$. We denote $Path^{fin}_s$ as the collection of finite length paths that start in a state $s$. The nondeterminism of an MDP is resolved with the help of a scheduler.
\begin{definition}
	A scheduler $\mu:Path_{\hat{s}}^{fin}\rightarrow A$ (also known as adversary or policy) of an MDP $\mathcal{M}$ is a function mapping every finite path $\omega_{fin}\in Path_{\hat{s}}^{fin}$ onto an action $a\in A(last(\omega_{fin}))$ where $last(\omega_{fin})$ denotes the last state of $\omega_{fin}$.
\end{definition}
\indent Intuitively, the scheduler specifies the next action to take for each finite path. The behavior of an MDP $\mathcal{M}$ under a given scheduler $\mu$ is purely probabilistic and thus reduces to a discrete time Markov chain (DTMC) with a set of recurrent classes. A recurrent class $S_{rc}$ refers to a set of recurrent states that the probability to return to any state $s\in S_{rc}$ after leaving it is $1$ and any pair $s,s'\in S_{rc}$ is communicating, that is,  $s'$ is reachable from $s$ (there exists a state sequence $s_0s_1s_2...s_n$ with nonzero probability such that $s_0=s,s_n=s'$) and $s$ is reachable from $s'$.  
A policy $\mu$ is called memoryless if $\mu(\omega_{fin})=\mu(last(\omega_{fin})$, that is, the action to select only depends on the current state that the MDP is in. A policy is said to have memory otherwise. An MDP is said to be \emph{communicating} \cite{puterman2014markov} if there exits a memoryless and deterministic (stationary) policy such that the induced DTMC is communicating, that is , all the states pair $s,s'$ in the DTMC are communicating.   
\subsection{Linear Temporal Logic}
Under the MDP model, LTL  can be used to describe a wide range of properties of sequences of states such as safety (bad things never happen), liveness (good things eventually happen), persistence (good things happen infinitely often), response (if A then B) and so on. 

An LTL formula is built up from a set of atomic propositions $(AP)$, $true,false$, the Boolean operators $\neg$ (negation), $\vee$ (disjunction), $\wedge$ (conjunction) and temporal operators  $\square$ (always), $X$ (next), $\cup$ (until), $\diamondsuit$ (eventually). An LTL formula $\phi$ can always be represented by a deterministic Rabin automaton (DRA) \cite{baier2008principles} $\mathcal{R}_{\phi}=(Q,q_0, 2^{AP},\delta, ACC)$ where $Q$ is a set of finite states, $q_0\in Q$ is the initial state,  $2^{AP}$ is the alphabet, $\delta:Q\times 2^{AP}\rightarrow Q$ is the transition function, and $ACC=\{(L(1),K(1)),...,(L(M),K(M))\}$, with $M$ being a positive integer, is a set of tuples where $L(i),K(i)\subseteq Q$ for all $i=1,...,M$. Given a infinite word $l=\sigma_0\sigma_1...$ where $\sigma_i\in 2^{AP}$ for all $i$, a unique path $\omega=q_0q_1...$ will be induced where $q_{i+1}=\delta(q_i,\sigma_i)$. A run $\omega$ is \emph{{accepted}} in $\mathcal{R}_\phi$ if there exists a pair $(L,K)\in ACC$ such that 1) there exists $n\geq 0$, such that for all $m\geq n$, we have $q_m\notin L$, and 2) there exist infinitely many indices $k$ where $q_k\in K$. Note that $L$ could be empty but $K$ may not. Intuitively, for a pair $(L,K)$, the acceptance condition means that an accepted run should visit states in $L$ only finite times and states in $K$ infinite times. Given an LTL formula $\phi$, it is always possible to construct a DRA that accepts exactly the words that satisfy $\phi$.

\section{Modeling and problem formulation}
\label{sec: problem-formulation}
\subsection{Modeling}
\begin{figure}[h]
	\centering
	\includegraphics[scale=0.35]{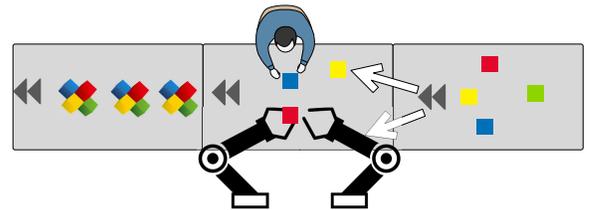}
	\caption{Human robot collaborative assembly}
	\label{fig:scenario}	
\end{figure}
\noindent{\bf Task model}
Consider an industrial assembly process involving both human and robot as shown in Figure \ref{fig:scenario}. The assembly task can be represented by an action-deterministic transition system $\mathcal{M}^w$ as shown in Figure \ref{fig:spec}. There are $N$ parts that need to be assembled by actions $A^r=\{a^r_1,..,a^r_{N_r}\}$ denoting the robot actions, and $A^h=\{a^h_1,...,a^h_{N_h}\}$ denoting the human actions. In Figure \ref{fig:spec}, $a_i\in\{a_1,a_2\}$ can represent  either human or robot's corresponding action. 

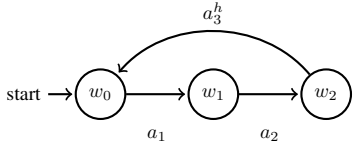
\begin{figure}[h]
	\centering	
	\begin{tikzpicture}[shorten >=1pt,node distance=2cm,on grid,auto, bend angle=50, thick,scale=.75, every node/.style={transform shape}]
	\node[state,initial] (s_0)   {$w_0$};
	\node[state] (s_1) [right=of s_0] {$w_1$};
	
	\node[state] (s_2) [right=of s_1] {$w_2$};

	\path[->]
	(s_0) edge node [pos=0.5, sloped, below=0.5]{$a_1$} (s_1)
	
	(s_1) edge node [pos=0.5, sloped, below=0.5]{$a_2$} (s_2)
	
	(s_2) edge  [pos=0.5, bend right, above] node {$a_{3}^h$} (s_0)
	;
	\end{tikzpicture}
	\caption{Assembly plan $\mathcal{M}^w$}
	\label{fig:spec}
	
\end{figure}	

During the task execution all necessary parts are fed by suitable mechanisms such as the conveyor belts. The two arrows in Figure \ref{fig:scenario} denote the distribution of the parts. As seen in Figure \ref{fig:spec}, these assembly tasks often consist of a sequence of stages where each stage represents one sub-task that needs human or machine's actions to complete.  In practice, the difficulty levels of the sub-tasks in the assembly process often varies from stages to stages, which may result in stochastic variations on machine's performance.

\noindent{\bf Robot Model} This paper introduces an MDP to model the stochastic dynamics of the machine performances. For each robot action $a_i^r$, robot may finish it with some cost representing the energy consumption and the time, additionally, $a_i^r$ may lead to a faulty state with certain probability $p_i$ to characterize the possible failure of the robot.  Once the robot is in the faulty state, human repair is required.

Formally the robot model is an MDP $\mathcal{M}_{r}=\{S^{r}, s_{0}^{r}, A^{r}\cup\{repair\}, {P}^{r}, L^r, c^r\}$ where $S^{r}$ is a finite set of states representing the machine state, $s_{0}^{r} \in S^{r}$ is the initial state, and ${T}^{r}$ is the transition matrix. Figure \ref{fig:robot} shows a simple example of two performance levels in a robotic system. In particular, the state labeled $Normal$ represents a normal performance level under which the actions taken by the robot can accomplish the tasks as expected, while the state labeled $Faulty$ denotes an abnormal status under which the robot will fail to performing any assigned tasks.
\begin{figure}[h]
	\centering	
	\begin{tikzpicture}[shorten >=1pt,node distance=4cm,on grid,auto, bend angle=30, thick,scale=.75, every node/.style={transform shape}]
	\node[state,initial] (s_0)   {$r_0$};
	\node[state] (s_1) [right=of s_0] {$r_1$};

	\path[->]
	(s_0) edge [pos=0.5, bend left, above=0.5] node {$a^{r}_i,p_i$} (s_1)
	(s_0) edge [pos=0.5, loop, above=0.1] node {$a^{r}_i, 1-p_i$} (s_0)
	(s_1) edge [pos=0.5, bend left, below=0.5] node {$repair,1$} (s_0)
	;
	\node[text width=3cm] at (0.8,-1) 
	{\{normal\}};		
	\node[text width=3cm] at (4.8,-1) 
	{\{faulty\}};
	\end{tikzpicture}
	\caption{Robot performance model}
	\label{fig:robot}
\end{figure}
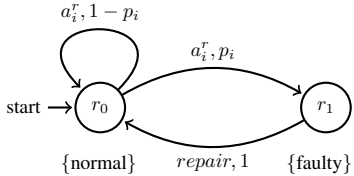	

\noindent{\bf Human trust and fatigue model} 

The interaction between human trust and robot performance is well known to exist in general human robot systems, such as process control systems \cite{muir1990operators} and human-robot collaboration system \cite{desai2013impact}. 
In particular, the work in \cite{lee1992trust, desai2013impact} demonstrated that the dynamics of the human trust can be adequately modeled as a first order linear system with the robot performance as its input. This finding motivates us to model the human trust as an MDP whose states represent different trust levels of the human operator. 

\begin{definition}[Human Trust Model]
	A human trust MDP model is a tuple $\mathcal{M}_{t}=\{S^{t}, s_0^t, A^t, {P}^{t}, c^{t}\}$ where
	\begin{itemize}
		\item $S^{t}$ is a finite set of human trust levels.
		\item $s^{t}_{0} \in S^{t}$ is the initial human trust level.
		\item $A^t=A^r\cup\{repair\}$ is the action set
		\item ${P}^{t}(s^t(i),a,s^{t}(i+1)=Pr(s^{t}(i+1)|s^t(i),a)$ for $a\in A^r\cup\{repair\}$.
		\item $c^{t}: S^{t} \times A^t \rightarrow \mathbb{R}^{+}$ is a positive cost function. 
	\end{itemize}
\end{definition} 
Similarly, it is also well known that the human performance is affected by the fatigue level \cite{ji2006probabilistic}. Thus, we model the human fatigue process as an MDP $\mathcal{M}_{f}=\{S^{f}, s_0^f, A^f, {P}^{f}, c^{f}\}$ where $A^f=A^h\cup A^r\cup \{repair\}$.

\noindent{\bf Human robot interaction model}. Under the definitions of the four models, namely task model $\mathcal{M}^w$, robot model $\mathcal{M}^r$, human trust model $\mathcal{M}^t$ and fatigue model $\mathcal{M}^f$, the human robot interaction considered in this paper is characterized by the fact that the human trust and fatigue can be appropriately regulated by controlling robot performance through strategic task assignment. A desired level of human trust and fatigue will certainly improve the human performance in the assembly process, thereby leading to an effective human-robot collaboration. 
In the presence of human robot interaction, the manufacturing system is modeled as an MDP $\mathcal{M}$ from parallel composition of the four models. 
$$
\mathcal{M}=\mathcal{M}^w||\mathcal{M}^r||\mathcal{M}^t||\mathcal{M}^f
$$
Where $||$ denotes the parallel composition defined as follows.
\begin{definition}[parallel composition]\label{def:parallel}
	Given two Markov decision processes $\mathcal{M}_1=(S_1,s^1_0,A_1,P_1,L_1, c_1)$ and $ \mathcal{M}_2=(S_2,s^2_0,A_2,P_2,L_2, c_2)$, the parallel composition of $\mathcal{M}_1$ and $\mathcal{M}_2$ is the MDP $\mathcal{M}=\mathcal{M}_1||\mathcal{M}_2=(S_1\times S_2,s^1_0\times s^2_0,A_1\cup A_2,P,L,c)$ where $L(s_1,s_2)=L_1(s_1)\cup L_2(s_2)$, and  
	\begin{itemize}
		\item	$P((s_1,s_2),a,(s_1',s_2'))=P_1(s_1,a,s_1')P_2(s_2,a,s_2')$, $c((s_1,s_2),a)=c_1(s_1,a)+c_2(s_2,a)$, if $a\in A_1\cap A_2$ and both $P_1(s_1,a,s_1')$ and $P_2(s_2,a,s_2')$ are defined, or
		\item $P((s_1,s_2),a,(s_1',s_2))=P_1(s_1,a,s_1')$, $c((s_1,s_2),a)=c_1(s_1,a)$ if $a\in A_1\backslash A_2$ and $P_1(s_1,a,s_1')$ is defined, or
		\item $P((s_1,s_2),a,(s_1,s_2'))=P_2(s_2,a,s_2')$, $c((s_1,s_2),a)=c_2(s_2,a)$ if $a\in A_2\backslash A_1$ and $P_2(s_2,a,s_2')$ is defined.
	\end{itemize}
\end{definition}
From the definition of the parallel composition, it can be seen that the state $s$ of $\mathcal{M}$ is actually a four tuple $s=(s^w,s^r,s^t,s^f)$. Also note that the transition probability in the manufacturing process is not only dependent on the actions taken by the robot or human operator, but also dependent on the robot performance or the human trust level when those actions are taken. This dependency explicitly models the impact of the human robot interaction on the manufacturing process, and addresses the practical concern that the human or robot's real time performance does affect how well the manufacturing tasks being accomplished. Furthermore, we would like to point out that the quantification of the cost related to human models is application specific. Such metric may need to consider the time-to-completion, workload assessment,  comfort level and cognitive workload and so on \cite{fujieSharedAutonomy}. Detailed task analysis with subjective rating from human worker are needed for cost function construction. Inverse reinforcement learning  \cite{abbeel2004apprenticeship} may also be applied to learn a cost function based on human demonstrations of the human robot collaborative tasks.  
\subsection{Problem formulation} 
Once we get the system model $\mathcal{M}$, we are interested in how to optimally assign actions to human and robot dynamically based on their current states. 

%

Consider the composed MDP $\mathcal{M}$, suppose we are using the task model as shown in Figure \ref{fig:spec}, we label all the states of the form $S_{\pi}=\{w_0,*,*,*\}$ as $\pi$ where $*$ means arbitrary state in each submodule as appropriate. That is, $S_{\pi}$ denotes all the states when one round of assembly is finished. The assembly process should keep on going and the finished state should be visited infinite times.

We say that each visit to the set $S_\pi$ completes a cycle. That is, from the initial state, the first time to reach $s\in S_\pi$ is the first cycle, after that the path revisiting $s'\in S_\pi$ is the second cycle and so on. Given a path $\omega^s=s_0s_1...s_ns_{n+1}...$, then we denote the number of cycles completed up to stage $n$ as $C(\omega^s,n)$ which starts with $1$ at the very beginning. As we are dealing with an assembly task, the average cost per cycle (ACPC) is a more reasonable cost measure \cite{ding2014optimal} to optimize. Then we are ready to formally define our problem.

\begin{problem}\label{prob:main_problem}
	Given the composed MDP $\mathcal{M}$ and $S_\pi$, find a task assignment policy $\mu$ which optimizes the probability of finishing the assembly process infinitely often while minimizing the ACPC defined as
	\begin{equation}\label{eqn:optimal}
		\begin{split}
			J(s_0)=&\limsup_{N\rightarrow\infty}E\{\frac{\sum_{k=0}^{N}c(s_k,\mu(\omega_\mu^{s_k}))}{C(\omega_\mu^s,N)}|S_\pi\text{ is visited}\\ &\text{infinitely often and $L(w^s_\mu)$ obeys other constraints}\}
		\end{split}
	\end{equation} 
	where $\omega_\mu^s$ denotes the state path under policy $\mu$ and $\omega_\mu^{s_k}$ denotes the state path up to stage $k$. $L(\omega_\mu^s)$ is the observed label sequence along the $\omega_\mu^s$. Other constraints may impose additional requirements on $\mu$ as well, such as the robot and human assembly action should be alternatively assigned.
\end{problem}

\section{Dynamic task assignment with known model}\label{sec:main_results}
The requirement that the assembly process should be finished infinite times with additional constraints can be written into the following LTL formula
\begin{equation}\label{eqn:LTL}
	\phi=\square\diamondsuit\pi\wedge\varphi
\end{equation}
where $\varphi$ denotes other constraints. Therefore, we can convert the Problem \ref{prob:main_problem} into the following.
\begin{problem}\label{prob:LTL_problem}
	Given the composed MDP $\mathcal{M}$, LTL formula $\phi$ in (\ref{eqn:LTL}) and $S_\pi$, find a policy $\mu$ which optimizes the probability of satisfying $\phi$ while minimizing
	\begin{equation}\label{eqn:optimalltl}
		J(s_0)=\limsup_{N\rightarrow\infty}E\{\frac{\sum_{k=0}^{N}c(s_k,\mu(\omega_\mu^{s_k}))}{C(\omega_\mu^s,N)}|L(\omega_\mu^s)\models\phi\}
	\end{equation} 
\end{problem}

Given the task model, robot model, human trust model and human fatigue model as MDPs, we compose them using parallel composition introduced in Definition \ref{def:parallel} into the system model $\mathcal{M}$. The specification is given in terms of an LTL formula $\phi$ and is converted into a DRA $\mathcal{R}_\phi$. An example of the DRA corresponds to the LTL formula $\phi=\square\diamondsuit\pi$ is shown in Figure \ref{fig:dra} where $K=\{q_1\}$ and is marked in double circles. So the accepted run in $R_\phi$ should visit $q_1$ infinitely often.

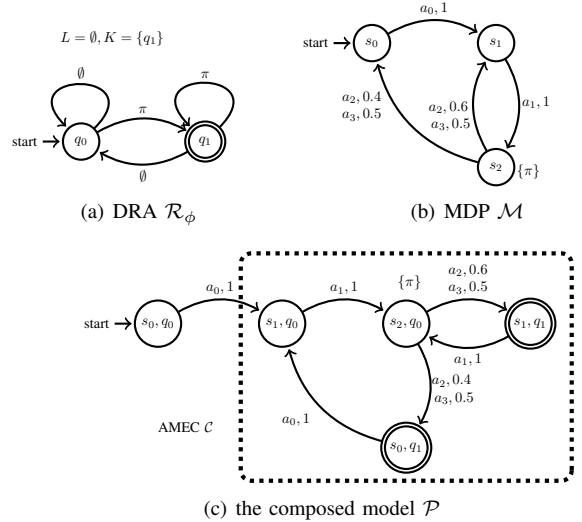
\begin{figure}[h]
	\centering
	
	\subfigure[DRA $\mathcal{R}_\phi$]{\label{fig:dra} 
		\begin{tikzpicture}[shorten >=1pt,node distance=3cm,on grid,auto, bend angle=30, thick,scale=0.55, every node/.style={transform shape}]
		\node[state,initial] (s_0)   {$q_0$};
		\node[state,accepting] (s_1) [right=of s_0] {$q_1$};

		\path[->]
		(s_0) edge [pos=0.5, bend left, above=0.5] node {$\pi$} (s_1)
		(s_0) edge [pos=0.5, loop, above=0.1] node {$\emptyset$} (s_0)
		(s_1) edge [pos=0.5, bend left, below=0.5] node {$\emptyset$} (s_0)
		(s_1) edge [pos=0.5, loop, above=0.1] node {$\pi$} (s_1)
		;
		
		\node[text width=3cm] at (1,2.5) {$L=\emptyset,K=\{q_1\}$};	
		\end{tikzpicture}}
	\subfigure[MDP $\mathcal{M}$]{\label{fig:mdp} 
		\begin{tikzpicture}[shorten >=1pt,node distance=3cm,on grid,auto, bend angle=30, thick,scale=0.55, every node/.style={transform shape}]
		\node[state,initial] (s_0)   {$s_0$};
		\node[state] (s_1) [right=of s_0] {$s_1$};
		\node[state] (s_2) [below=of s_1] {$s_2$};		
		
		\path[->]
		(s_0) edge [pos=0.5, bend left, above=0.5] node {$a_0,1$} (s_1)
		(s_1) edge [pos=0.6, bend left, above=0.5] node {$~~~~~~~a_1,1$} (s_2)
		(s_2) edge [pos=0.5, bend left, above=0.5] node [align=center] {$a_2,0.4$~~~~~~~~~~~~~~~~~~~~~\\$a_3,0.5$~~~~~~~~~~~~~~~~~~~~~} (s_0)
		(s_2) edge [pos=0.6, bend left, below=0.5] node [align=center] {$a_2,0.6$~~~~~~~~~~~\\$a_3,0.5$~~~~~~~~~~} (s_1)
		;
		
		\node[text width=3cm] at (5,-3.1) 
		{\{$\pi$\}};
		\end{tikzpicture}}
	\subfigure[the composed model $\mathcal{P}$]{\label{fig:mdp+dra} 
		\begin{tikzpicture}[shorten >=1pt,node distance=3cm,on grid,auto, bend angle=30, thick,scale=0.55, every node/.style={transform shape}]
		\node[state,initial] (s_0)   {$s_0,q_0$};
		\node[state] (s_1) [right=of s_0] {$s_1,q_0$};
		\node[state] (s_2) [right=of s_1] {$s_2,q_0$};
		\node[state,accepting] (s_3) [right=of s_2] {$s_1,q_1$};
		\node[state,accepting] (s_4) [below=of s_2] {$s_0,q_1$};		
		
		\path[->]
		(s_0) edge [pos=0.5, bend left, above=0.5] node {$a_0,1$} (s_1)
		(s_1) edge [pos=0.5, bend left, above=0.1] node {$a_1,1$} (s_2)
		(s_2) edge [pos=0.5, bend left, above=0.1] node [align=center] {$a_2,0.6$\\$a_3,0.5$} (s_3)
		(s_3) edge [pos=0.5, bend left, below=0.5] node {$a_1,1$} (s_2)
		(s_2) edge [pos=0.8, bend left, above=0.5] node [align=center] {~~~~~~~~~~$a_2,0.4$\\~~~~~~~~~~~$a_3,0.5$} (s_4)
		(s_4) edge [pos=0.5, bend left, below=0.5] node {$a_0,1~~~~~~~~~~$} (s_1)
		;
		\draw[black,ultra thick,rounded corners,dotted] (10,-3.8) rectangle (2,1.7);
		
		\node[text width=3cm] at (1.5,-2.5) 
		{AMEC $\mathcal{C}$};
		\node[text width=3cm] at (7.3,1) 
		{\{$\pi$\}};
		\end{tikzpicture}}
	
	\caption{An example of DRA $\mathcal{R}_\phi$ for $\phi=\square\diamondsuit\pi$, the MDP $\mathcal{M}$, and the construction of the composed model $\mathcal{P}$}
	\label{fig:example_DRA}
\end{figure}

Then the product $\mathcal{P}$ between the MDP $\mathcal{M}$ and Rabin automaton $\mathcal{R}_\phi$  is needed to capture all the paths of $\mathcal{M}$ to satisfy $\phi$.


\begin{definition}\label{def:DRA_comp} \cite{ding2014optimal}
	Given an MDP $\mathcal{M}=\{S , s_{0} , A , P, \mathcal{AP}, \mathcal{L}, c\}$ and a DRA $\mathcal{R}_{\phi}=(Q,q_0, 2^{AP},\delta, ACC)$, the product is an MDP $\mathcal{P}=(S_{\mathcal{P}},s_0\times q_0, A,P_{\mathcal{P}}, ACC_{\mathcal{P}} , S_{\mathcal{P}\pi}, c_\mathcal{P})$ where
	
	\begin{itemize}
		\item $S_{\mathcal{P}}=S \times Q$
		\item$ P_{\mathcal{P}}((s,q),a,(s',q')=Pr(s'|s,a)$ if $q'=\delta(q,\mathcal{L}(s))$, 0 otherwise;
		\item $ACC_{\mathcal{P}}=\{(L_\mathcal{P}(1),K_\mathcal{P}(1)),...,(L_\mathcal{P}(M),K_\mathcal{P}(M))\}$ where $L_\mathcal{P}(i)=S\times L(i)$, $K_\mathcal{P}(i)=S\times K(i)$, for $i=1,...,M$;
		\item	$S_{\mathcal{P}\pi}=S_{\pi}\times Q$;
		\item $c_\mathcal{P}((s,q),a)=c(s,a)$.
	\end{itemize}
\end{definition}

A simple MDP $\mathcal{M}$ is shown in Figure \ref{fig:mdp} and its product MDP $\mathcal{P}$ is illustrated in Figure \ref{fig:mdp+dra} where $L_\mathcal{P}=\emptyset,K_\mathcal{P}=\{(s_1,q_1),(s_0,q_1)\}$ which are marked in double circles. Note that there is a one-to-one correspondence between a path $s_0s_1,...$ on $\mathcal{M}$ and a path $(s_0,q_0)(s_1,q_1)...$ on $\mathcal{P}$ with the same cost according to the Definition \ref{def:DRA_comp} which enables us to only focus on the behavior of the composed model $\mathcal{P}$. If there is a memoryless policy on $\mathcal{P}$, it is always possible to map it back to $\mathcal{M}$ and possibly become a policy with memory which is from the underlying DRA $\mathcal{R}_\phi$.

Then the problem of maximizing the probability of satisfying the LTL formula $\phi$ for $\mathcal{M}$ can be transformed into a problem of maximizing the probability of reaching a set $\mathcal{C}(\mathcal{P})$ of Accepting Maximal End Components (AMEC) in the product MDP $\mathcal{P}$ \cite{baier2008principles}.  

\begin{definition}\cite{ding2014optimal}
	Given a pair $(L_\mathcal{P},K_{\mathcal{P}})$, an end component $\mathcal{C}$ is a communicating MDP $(S_\mathcal{C},A_\mathcal{C},P_{\mathcal{C}},K_\mathcal{C},S_{\mathcal{C}\pi},c_{\mathcal{C}})$ where $S_\mathcal{C}\subseteq S_\mathcal{P}$,$A_\mathcal{C}\subseteq A$, $A_\mathcal{C}(s)\subseteq A_\mathcal{P}(s)$ for all $s\in S_\mathcal{C}$, $K_\mathcal{C}=S_\mathcal{C}\cap K_\mathcal{P}$, $S_{\mathcal{C}\pi} = S_\mathcal{C}\cap S_{\mathcal{P}\pi}$, and $c_{\mathcal{C}}(s,a)= c_{\mathcal{P}}(s,a)$ for $s\in S_\mathcal{C},a\in A_\mathcal{C}(s)$. For any $P(s,a,s')>0$, $s\in S_\mathcal{C},a\in A_\mathcal{C}(s)$, $s'\in S_\mathcal{C}$ and $P_\mathcal{C}(s,a,s')=P_\mathcal{P}(s,a,s')$. An Accepting Maximal End Components (AMEC) is the largest such en component with  nonempty $K_\mathcal{C}$ and $S_\mathcal{C}\cap L_\mathcal{P}=\emptyset$.
\end{definition}

AMECs with respect to the same acceptance pair are pairwise disjoint. Therefore the total number of AMECs is bounded by $|S||Q||ACC_\mathcal{P}|$. There exists at least one such AMEC and the procedure of finding the AMEC set $\mathcal{C}(\mathcal{P})$ can be found in \cite{baier2008principles}. As shown in  Figure \ref{fig:mdp+dra}, there is only one AMEC $\mathcal{C}$ in $\mathcal{C}(\mathcal{P})$ which is inside the dotted box. Once $\mathcal{C}$ is reached, it is always possible to construct a policy such that some $K\in K_{\mathcal{P}}$ is visited infinitely often and thus the LTL constraint is satisfied. Therefore the maximum probability of satisfying $\phi$ is equal to the maximum probability of reaching $\mathcal{C}(\mathcal{P})$. We can find a stationary policy $f_{\rightarrow\mathcal{C}}$ via value iteration or linear program such that the probability of reaching a state in one of the AMEC $\mathcal{C}\in\mathcal{C}(\mathcal{P})$  equals $P_{max}(\phi)$. Note that $f_{\rightarrow\mathcal{C}}$ is defined only outside of $\mathcal{C}$.
%
%

Now we focus on minimizing the average cost per cycle (ACPC). From (\ref{eqn:optimalltl}) we know that as $N$ goes to infinity, the cost contributed by the finite path from the initial state to a state in an AMEC would be zero and $J(s_0)$ is actually solely decided by the policy $f_{\circlearrowright\mathcal{C}}$ which is defined inside of the AMEC. Once inside $\mathcal{C}$, the task is then to construct a stationary policy $f_{\circlearrowright\mathcal{C}}$ such that a state in $K_{\mathcal{P}}$ is visited infinitely often and minimize (\ref{eqn:optimalltl}). We can utilize Algorithm 1 in \cite{ding2014optimal}, a policy iteration algorithm to get the optimal (or suboptimal) $f_{\circlearrowright\mathcal{C}}$ for each $\mathcal{C}$. Then we find the optimal policy $f_\mathcal{C}$ for all $\mathcal{C}\in\mathcal{C}(\mathcal{P})$ such that 
\[
f_\mathcal{C}= 
\begin{cases}
f_{\rightarrow\mathcal{C}}(i),& \text{if } i\notin S_\mathcal{C}\\
f_{\circlearrowright\mathcal{C}}(i)             & \text{otherwise}
\end{cases}
\]
The Problem \ref{prob:LTL_problem} is then solved by finding optimal policy $f^*$ from $f_\mathcal{C}^*$ such that it incurs the optimal ACPC among all $f_\mathcal{C}$. Note that this stationary policy is defined on $\mathcal{P}$, whose state is the tuple $(s,q)$ where $q$ is the DRA's state. Therefore different actions could be assigned for the same $s$ on $\mathcal{M}$ since $q$ may be different and it actually induces a policy with finite memory for $\mathcal{M}$.


\section{Dynamic task assignment with unknown model}\label{sec:approxi}
For human robot interaction applications, in many cases the  model may not be known a prior and has to be learned. This section discusses finding the optimal control policy under an unknown MDP model.

With the unknown MDP model $\mathcal{M}$, LTL specification $\phi$, the composed MDP $\mathcal{P}$, We have the following assumptions. 
\begin{assumption}\label{assump:structure}
	The structure of $\mathcal{M}$, e.g. $TS_{\mathcal{M}}$ is known, but the transition probability is not known.
\end{assumption}
\begin{assumption}\label{assump:loop}
	Every loop in the form of $s_0s_1...s_ns_0$ induced by any policy in $\mathcal{C}\in \mathcal{C}(\mathcal{P})$ of $\mathcal{P}$ includes at least one state $(s,q)$ where $s\in S_\pi$ and the longest number of steps to finish a cycle starting at $(s,q)$ and ending at $(s',q')$ where $s,s'\in S_\pi$ is bounded by an finite positive integer $L$.
\end{assumption}
\begin{assumption}\label{assump:AMEC}
	Given the LTL formula $\phi$ and the MDP $\mathcal{M}$, $P_{max}(\phi)=1$ and for every AMEC $\mathcal{C}\in \mathcal{C}(\mathcal{P})$, there exists deterministic and memoryless policies such that $P(\diamondsuit \mathcal{C}) =1$ under this policy. Furthermore, the first state $(s,q)\in \mathcal{C}$ for all such polices visit is unique and $s\in S_\pi$. We can such a state the entrance of $\mathcal{C}$ and denoted by $ent(\mathcal{C})$. 
\end{assumption}
\begin{assumption}\label{assump:LTL}
	Every $\mathcal{C}\in \mathcal{C}(\mathcal{P})$ is a unichain MDP \cite{puterman2014markov}, that is, for every deterministic and memoryless policy in $\mathcal{C}$, it will incur single recurrent class.
\end{assumption}
\begin{remark}
	In many practical scenarios, the underlining structure of the MDP models can be known beforehand. For example, the assembly process model, the robot performance model and so on. Therefore, Assumption 1 is not overly restrictive. Assumption \ref{assump:loop} makes sure that no matter which action is chosen, the system always makes progress towards finishing a cycle. Therefore the number of steps to finish a cycle is bounded. With Assumption \ref{assump:AMEC}, we can examining the optimality in each $\mathcal{C}\in \mathcal{C}(\mathcal{P})$ and designs the policy to reach $\mathcal{C}$ with the optimal ACPC. Also, since the first state in $\mathcal{C}$ to be visited is fixed, it facilitates us to compare the optimality of bounded time polices in different $\mathcal{C}$'s. Assumption \ref{assump:LTL} is similar to the one stated \cite{ding2014optimal} in the sense that we guarantee that the optimal policy also satisfies the LTL specification.
\end{remark}

\begin{problem}\label{prob:unknown_problem}
	Given the MDP model $\mathcal{M}$ with unknown transition probability, and LTL specification $\phi$, two parameters $\delta,\epsilon$ with Assumptions \ref{assump:structure}, \ref{assump:loop}, \ref{assump:AMEC} and \ref{assump:LTL}, design an algorithm such that with probability at least $1-\delta$, find a policy $f_{\rightarrow\mathcal{C}}$ for every $\mathcal{C}\in\mathcal{C}(\mathcal{P})$ such that $\mathcal{C}$ can be reached with probability 1. Furthermore, find a policy $f_{\circlearrowleft\mathcal{C}} :S\times Q\rightarrow A$  defined in one of the AMEC $\mathcal{C}\in\mathcal{C}(\mathcal{P})$ such that once enters this $\mathcal{C}$, the $T$ cycle ACPC is $\epsilon-$close to the optimal ACPC in $\mathcal{M}$. And the sample complexity is polynomial in $|S|,|A|,T,\frac{1}{\epsilon},\frac{1}{\delta}$ and linear in the number of accepting pairs. 
\end{problem}

From the problem formulation, we are only interested in the ACPC after reaching the AMEC. To solve Problem \ref{prob:unknown_problem}, we first define $\alpha-$approximation in MDPs by combining the definitions in \cite{brafman2002r} and \cite{Fu-RSS-14} to consider both labels and rewards.
\begin{definition}\label{dfn:alphaMDP}
	Let $\mathcal{M}=(S,\hat{s},A,P,L,C)$ and $\bar{\mathcal{M}}=(\bar{S},\bar{\hat{s}},\bar{A},\bar{P},\bar{L},\bar{C})$ be two labeled MDPs as defined in Definition \ref{def:MDP}. We say that $\bar{\mathcal{M}}$ is an $\alpha-$approximation of $\mathcal{M}$ if
	\begin{itemize}
		\item $S=\bar{S}$, $\hat{s}=\bar{\hat{s}}$, $A=\bar{A}$, $P(s,a,s')>0$ iff $\bar{P}(s,a,s')>0$, $L=\bar{L}$ and $C=\bar{C}$. That is, they share the same state and action space, initial condition, structure, labeling and reward function.
		\item $|P(s,a,s')-\bar{P}(s,a,s')|\leq\alpha$ for any $s,s'$ and $a$.
	\end{itemize}
\end{definition}
By definition, it is not hard to see that if $\bar{\mathcal{M}}$ is an $\alpha-$approximation of $\mathcal{M}$, then $\bar{\mathcal{P}}=\bar{\mathcal{M}}\times\mathcal{R}_{\phi}$ is an $\alpha-$approximation of $\mathcal{P}=\mathcal{M}\times\mathcal{R}_{\phi}$. Since we assume that $P_{max}(\mathcal{M}\models\phi)=1$, we then have the following lemma. 
\begin{lemma}
	If $\bar{\mathcal{M}}$ is an $\alpha-$approximation of $\mathcal{M}$, $\phi$ is an LTL formula and  $P_{max}(\mathcal{M}\models\phi)=1$, then $P_{max}(\bar{\mathcal{M}}\models\phi)=1$.
\end{lemma}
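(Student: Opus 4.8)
The plan is to reduce the claim to two purely structural, graph-theoretic facts about the product MDP, exploiting that an $\alpha$-approximation in the sense of Definition \ref{dfn:alphaMDP} preserves the support of every transition, i.e.\ $P(s,a,s')>0 \iff \bar{P}(s,a,s')>0$. Together with the observation already made in the excerpt that $\bar{\mathcal{P}}=\bar{\mathcal{M}}\times\mathcal{R}_\phi$ is an $\alpha$-approximation of $\mathcal{P}=\mathcal{M}\times\mathcal{R}_\phi$, this means $\mathcal{P}$ and $\bar{\mathcal{P}}$ share exactly the same underlying transition graph and differ only in the numerical values of their positive transition probabilities. The entire argument therefore takes place at the product level.

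First I would recall the reduction already used in the excerpt: $P_{max}(\mathcal{M}\models\phi)$ equals the maximum probability, over all schedulers of $\mathcal{P}$, of reaching the union of accepting maximal end components $\mathcal{C}(\mathcal{P})$ from the initial state $(s_0,q_0)$, and the same identity holds verbatim for $\bar{\mathcal{M}}$ and $\bar{\mathcal{P}}$. Thus it suffices to show that the almost-sure reachability property ``$P_{max}(\diamondsuit\mathcal{C})=1$ from the initial state'' transfers from $\mathcal{P}$ to $\bar{\mathcal{P}}$.

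Next I would invoke that the maximal-end-component decomposition of an MDP depends only on which transitions are enabled, not on their exact values, since end components are defined through the existence of positive-probability transitions that stay within a sub-MDP together with communicating and reachability conditions. Because $\mathcal{P}$ and $\bar{\mathcal{P}}$ have the same graph, they have identical end components, and hence, after intersecting with each acceptance pair $(L_\mathcal{P}(i),K_\mathcal{P}(i))$, identical AMECs, so that $\mathcal{C}(\mathcal{P})=\mathcal{C}(\bar{\mathcal{P}})$. Then---and this is the crux---I would use that almost-sure maximal reachability is likewise a qualitative, graph-only property: the set of states from which a scheduler can force $\diamondsuit\mathcal{C}$ with probability exactly $1$ is computed by the standard graph algorithm for maximal reachability (see \cite{baier2008principles}), which inspects only the support of $P$ and never its numerical values. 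Since $\mathcal{P}$ and $\bar{\mathcal{P}}$ have the same support and the same target set $\mathcal{C}$, this set of sure-winning states coincides for the two MDPs. The hypothesis $P_{max}(\mathcal{M}\models\phi)=1$ places the initial state of $\mathcal{P}$ in that set, so the initial state of $\bar{\mathcal{P}}$ lies in it as well, yielding $P_{max}(\bar{\mathcal{M}}\models\phi)=1$.

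The main obstacle is making rigorous the two ``depends only on the graph'' claims---end-component invariance and almost-sure reachability invariance---under a change of transition probabilities that preserves support. Both are classical in probabilistic model checking, so I would either cite the corresponding results in \cite{baier2008principles} or, for a self-contained treatment, spell out that the witnessing scheduler produced by the maximal-reachability algorithm remains valid in $\bar{\mathcal{P}}$ (it requires only that the chosen successors have positive probability, which support-preservation guarantees) and that, for a fixed finite-memory scheduler over a finite Markov chain, ``reaching $\mathcal{C}$ with probability one'' depends only on which states are reachable and which bottom strongly connected components are hit---again a property of the graph alone.
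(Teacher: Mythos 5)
Your proposal is correct and follows essentially the same route as the paper: both reduce satisfaction of $\phi$ to reaching the AMECs of the product MDP, observe that $\mathcal{P}$ and $\bar{\mathcal{P}}$ share the same underlying graph and hence the same AMEC decomposition, and then transfer almost-sure reachability as a purely qualitative, support-only property. The only difference is in the last step: the paper fixes the witnessing policy $\mu$ and argues by contradiction that any positive-probability set of $\mathcal{C}$-avoiding paths in $\bar{\mathcal{P}}$ would also have positive probability in $\mathcal{P}$, whereas you appeal to the graph-based characterization of the almost-sure winning region (or, equivalently, the BSCC argument for a fixed scheduler)---your version is, if anything, the more rigorous one, since it avoids the paper's informal treatment of ``nonzero-probability paths'' over an infinite horizon.
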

\begin{proof}
	Since $\bar{\mathcal{M}}$ is an $\alpha-$approximation of $\mathcal{M}$, we know that $\bar{\mathcal{P}}=\bar{\mathcal{M}}\times\mathcal{R}_{\phi}$ is an $\alpha-$approximation of $\mathcal{P}=\mathcal{M}\times\mathcal{R}_{\phi}$ where $\mathcal{R}_{\phi}$ is then DRA from $\phi$. By definition, $P_{max}(\mathcal{M}\models\phi)=1$ implies that there exists a policy $\mu$ such that $P_\mu(\diamondsuit\mathcal{C}(\mathcal{P}))=1$ . Now we prove that $ P_{max}(\bar{\mathcal{M}}\models\phi)=1$ with the same policy $\mu$ by contradiction. 
	
	First, observe that $\mathcal{C}(\mathcal{P})$ is identical with $\mathcal{C}(\bar{\mathcal{P}})$ expect for the transition probabilities since $\mathcal{P}$ and $\bar{\mathcal{P}}$ share the same structure. Suppose under the same $\mu$ such that $P_\mu(\diamondsuit\mathcal{C}(\mathcal{P}))=1$, there exist paths with nonzero probability in $\bar{\mathcal{P}}$ that $\mathcal{C}(\bar{\mathcal{P}})$ is never reached, those paths must also exist with nonzero probability in $\mathcal{P}$ under $\mu$ because of the same structure, which is not possible since we know $ P_{max}(\mathcal{M}\models\phi)=1$.
\end{proof}

Once we have an estimated model that is close enough to the true MDP model, we have the following lemma to evaluate the performance of the synthesized optimal policy, which is similar to the simulation lemma in \cite{brafman2002r} but we extended it from T-step horizon to T-cycle horizon.
\begin{lemma}\label{lemma:main}
	If $\bar{\mathcal{M}}$ is a $\frac{\epsilon}{NTR_{max}D^2}-$approximation of $\mathcal{M}$, $N=|S|$, $T$ is a finite number of cycles and $0<\epsilon<1$, $D$ is the longest number of steps to finish a cycle starting from any finishing state under any proper policy that can complete $T$ cycle with probability $1$, then for any LTL specification $\phi$, for the T-cycle policy $f :S\times Q\rightarrow A
	$
	defined in $\mathcal{C}$, an AMEC, we have that $|J^{f,T}_{\mathcal{P}}(s,q)-J^{f,T}_{\bar{\mathcal{P}}}(s,q)|\leq\epsilon$, where $(s,q)=ent(\mathcal{C})$ and $J^{f,T}$ is the average cost in $T$ cycles with the policy $f$.  
\end{lemma}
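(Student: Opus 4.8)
The plan is to reduce the claim to a finite-horizon \emph{simulation} bound in the spirit of \cite{brafman2002r}. Since the $T$-cycle average cost satisfies $J^{f,T}(s,q)=\tfrac{1}{T}V^{f,T}(s,q)$, where $V^{f,T}$ denotes the expected \emph{total} cost accrued during the $T$ cycles, it suffices to show $|V^{f,T}_{\mathcal{P}}(s,q)-V^{f,T}_{\bar{\mathcal{P}}}(s,q)|\le \epsilon T$. By Assumption \ref{assump:loop} and the definition of $D$ in the statement, every cycle finishes in at most $D$ steps under the proper policy $f$, so the (random) stopping time at which the $T$-th cycle completes is bounded by $H:=TD$. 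To convert the random-length, cycle-indexed horizon into a genuine $H$-step horizon, I would augment each state of $\mathcal{P}$ with a cycle counter and make states absorbing with zero cost once $T$ cycles are complete; this modification is identical in $\mathcal{P}$ and $\bar{\mathcal{P}}$, leaves the per-step transition error untouched (the counter and the DRA component are deterministic functions of $(s,q,s')$, so only $s'$ ranges over $\le N=|S|$ values), and makes the fixed-horizon cost on the augmented chain equal to $V^{f,T}$.

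The core estimate is the propagation of the transition error through the dynamics. Let $d_k$ and $\bar{d}_k$ be the state distributions at step $k$ under $f$ in $\mathcal{P}$ and $\bar{\mathcal{P}}$, both started from the common state $ent(\mathcal{C})$, so $d_0=\bar{d}_0$, and let $P_f,\bar{P}_f$ be the transition matrices induced by $f$. Because $q'=\delta(q,L(s))$ is determined by $(s,q)$, the product transition error reduces to that of $\mathcal{M}$, and for every fixed state-action pair $\sum_{s'}|P(s,a,s')-\bar{P}(s,a,s')|\le N\alpha$, since each of the at most $N$ summands is at most $\alpha$. Splitting the one-step update,
\begin{equation*}
d_{k+1}-\bar{d}_{k+1}=(d_k-\bar{d}_k)P_f+\bar{d}_k(P_f-\bar{P}_f),
\end{equation*}
the first term contributes at most $\|d_k-\bar{d}_k\|_1$ in $\|\cdot\|_1$ by stochasticity of $P_f$, and the second at most $N\alpha$ by the per-row bound averaged against $\bar{d}_k$. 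Hence $\|d_k-\bar{d}_k\|_1\le\|d_{k-1}-\bar{d}_{k-1}\|_1+N\alpha$, which telescopes from $d_0=\bar{d}_0$ to $\|d_k-\bar{d}_k\|_1\le kN\alpha$.

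Finally I would assemble the cost bound. Since the cost function is shared and bounded by $R_{max}$,
\begin{equation*}
|V^{f,T}_{\mathcal{P}}-V^{f,T}_{\bar{\mathcal{P}}}|\le R_{max}\sum_{k=0}^{H-1}\|d_k-\bar{d}_k\|_1\le R_{max}N\alpha\,\frac{H(H-1)}{2}\le\frac{R_{max}N\alpha H^2}{2},
\end{equation*}
with $H=TD$. Dividing by $T$ and substituting $\alpha=\frac{\epsilon}{NTR_{max}D^2}$ gives $|J^{f,T}_{\mathcal{P}}-J^{f,T}_{\bar{\mathcal{P}}}|\le\frac{R_{max}N\alpha T D^2}{2}=\frac{\epsilon}{2}\le\epsilon$, as required (note the $D^2$ in $\alpha$ exactly cancels the $H^2/T=TD^2$ factor).

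The $\|\cdot\|_1$ propagation is the routine part, essentially the standard simulation-lemma computation. The genuinely new difficulty, and the step I expect to need the most care, is that the horizon is measured in \emph{cycles} rather than steps: the number of steps to complete $T$ cycles is a path-dependent random variable, and cost is accrued only up to that stopping time. The counter-augmentation with zero-cost absorption is precisely what lets me replace this random, cycle-indexed horizon by a deterministic $TD$-step horizon without inflating the per-step error or double-counting cost. Verifying that this reduction is faithful—in particular that introducing absorbing states changes neither value function relative to the true $T$-cycle quantity, and that $\tau_T\le TD$ almost surely under the policies admitted by Assumptions \ref{assump:loop} and \ref{assump:AMEC}—is where the argument must be pinned down.
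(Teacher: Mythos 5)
Your proof is correct, and it arrives at the stated bound (in fact a factor-two sharper one, $\epsilon/2$) by a route whose key decomposition differs from the paper's. The two arguments share the same skeleton: both fix the horizon at $H=DT$ steps by virtually extending paths with zero-cost absorption once $T$ cycles are complete, and both rest on the per-row error bound $\sum_{s'}|P(s,a,s')-\bar{P}(s,a,s')|\leq N\alpha$, which holds because the DRA component (and your cycle counter) evolve deterministically, so each product state has at most $N$ successors. Where you diverge is in how this per-step error is accumulated. The paper follows the hybrid-process telescoping of \cite{brafman2002r}: it defines interpolating processes $\mathcal{P}_i$ whose first $i$ transitions follow $\bar{\mathcal{P}}^f$ and whose remaining transitions follow $\mathcal{P}^f$, bounds each consecutive difference of \emph{path distributions} by $N\alpha$ via a prefix/transition/suffix factorization, obtains $\sum_{\omega}|P_{\mathcal{P}}(\omega)-P_{\bar{\mathcal{P}}}(\omega)|\leq DT\cdot N\alpha$, and multiplies by the uniform per-path cost bound $DR_{max}$. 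You instead run a recursion on the \emph{marginal} state distributions, $\|d_k-\bar{d}_k\|_1\leq kN\alpha$, and weight the error at step $k$ by the per-step cost $R_{max}$; summing the triangle $\sum_{k<H}k$ rather than the rectangle $H\cdot DR_{max}$ is exactly where your factor of two comes from. The trade-off is that the paper's path-space total-variation bound controls any bounded functional of the path, while your marginal recursion only handles costs that are additive over time steps---which is all this lemma needs, and your version is more elementary (no manipulation of prefix/suffix sums is required). Finally, the issue you flag at the end---that the stopping time at which $T$ cycles complete is bounded by $DT$ in $\bar{\mathcal{P}}$ as well as in $\mathcal{P}$---is resolved by the definition of $\alpha$-approximation itself: $P$ and $\bar{P}$ have identical supports, so the at-most-$D$-steps-per-cycle property is structural and shared by both models; the paper's proof implicitly relies on the same fact (it asserts $\sum_\omega P_{\mathcal{P}}(\omega)=\sum_\omega P_{\bar{\mathcal{P}}}(\omega)=1$ over paths of length at most $DT$ without comment).
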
 
\begin{proof}
	Under the T-cycle policy $f$, we have that
	$$
	|J^{f,T}_{\mathcal{P}}(s,q)-J^{f,T}_{\bar{\mathcal{P}}}(s,q)| \leq \sum_{\omega}|P_{\mathcal{P}}(\omega) J_{\mathcal{P}}(\omega)-P_{\bar{\mathcal{P}}} (\omega) J_{\bar{\mathcal{P}}}(\omega)|
	$$
	where $P(\omega)$ and $J(\omega)$ are the probability of the path $\omega$ that runs for exactly $T$ cycles  and its corresponding average cost in $T$ cycles under policy $f$. Since $f$ is proper, we have that $\sum_{\omega}P_{\mathcal{P}} (\omega)= \sum_{\omega}P_{\bar{\mathcal{P}}}(\omega)=1$. Each such $\omega$ starts in a state $(s,q)$ and ends in a state $(s',q')$ where $s,s'\in S_\pi$ and runs for exactly $T$ cycles. Note that the paths may be of different lengths by going through $T$ cycles, but their length are bounded by $DT$. To facilitate our analysis, we (virtually) extend all the paths to $DT$. Once a path $\omega$ finishes $T$ cycles but has not reached $DT$ steps, it stays at the last state with probability $1$ and receives $0$ cost from then on until the path is $DT$ steps long. Furthermore, the cycle counts will also stay at $T$. In this way, neither the probability nor the average cost will be affected. 
	
	From Definition \ref{dfn:alphaMDP}, it can be seen that 
	$$
	J_{\mathcal{P}}(\omega)=J_{\bar{\mathcal{P}}}(\omega)=\frac{\sum_{k=1}^{DT}c_k}{T}\leq \frac{DTR_{max}}{T} = DR_{max}
	$$
	Therefore, we need to show that
	$$
	\sum_{\omega}|P_{\mathcal{P}}(\omega) -P_{\bar{\mathcal{P}}} (\omega) |\leq \frac{\epsilon}{DR_{max}}
	$$
	
	We define the following random processes $\mathcal{P}_i$ by following the policy $f$ with all paths extended to $DT$ steps, the first $i$ transitions are the same as in $\bar{\mathcal{P}}^f$ and the rest transitions are the same as in $\mathcal{P}^f$, where $\mathcal{P}^f$ and $\bar{\mathcal{P}}^f$ are DTMCs induced by the policy $f$ from $\mathcal{P}$ and $\bar{\mathcal{P}}$ Clearly, we have that $\bar{\mathcal{P}}^f=\mathcal{P}_{DT}$ and $\mathcal{P}^f=\mathcal{P}_0$. Then we have that
	\begin{equation} \label{eqn:sum}
		\begin{split}
			&\sum_{\omega}|P_{\mathcal{P}}(\omega)-P_{\bar{\mathcal{P}}} (\omega) |=\sum_{\omega}|P_{\mathcal{P}_0}(\omega) -P_{\mathcal{P}_{DT}} (\omega) |\\
			&=\sum_{\omega}|P_{\mathcal{P}_0}(\omega)-P_{\mathcal{P}_1}(\omega)+P_{\mathcal{P}_1}(\omega)-P_{\mathcal{P}_2}(\omega)+...\\
			&+P_{\mathcal{P}_{DT-1}} (\omega) -P_{\mathcal{P}_{DT}} (\omega) |\\
			&\leq \sum_{i=0}^{DT-1}\sum_{\omega}|P_{\mathcal{P}_{i}} (\omega) -P_{\mathcal{P}_{i+1}}(\omega)|
		\end{split}
	\end{equation}
	Then what's left to prove is that 
	\begin{equation}\label{eqn:desired}
		\sum_{\omega}|P_{{i}} (\omega) -P_{{i+1}}(\omega)|\leq\frac{\epsilon}{TR_{max}D^2}
	\end{equation}
	
	where for notational simplicity, we rewrite $P_{\mathcal{P}_{i}}$ as $P_i$. Inspired by \cite{brafman2002r},  if $s_i$ denotes the state that is reachable after $i$ transitions, $pre(s_i)$ and $suf(s_i)$ denote the $i$-step prefix reaching $s_i$ (not including $s_i$) and the suffix starting at $s_i$, respectively, for some $\omega = pre(s_i).s_i.suf(s_{i+1})$, we have 
	$$
	P_i(\omega) = P_i(pre(s_i))P_i(s_{i+1}|s_i)P_i(suf(s_{i+1}))
	$$. Then (\ref{eqn:sum}) can be rewritten as 
	\begin{equation} \label{eqn:rewritten}
		\begin{split}
			&\sum_{\omega}|P_{{i}} (\omega) -P_{{i+1}}(\omega)| = \sum_{s_i}\sum_{pre(s_i)}\sum_{s_{i+1}}\sum_{suf(s_{i+1})}\\
			& |P_i(pre(s_i))P_i(s_{i+1}|s_i)P_i(suf(s_{i+1}))-\\
			& P_{i+1}(pre(s_{i}))P_{i+1}(s_{i+1}|s_i)P_{i+1}(suf(s_{i+1}))|
		\end{split}
	\end{equation}
	Note that (\ref{eqn:rewritten}) holds even if we virtually extended all the $\omega$ to the same length. Also from the definition of $\mathcal{P}_i$, it is not hard to observe that $P_i(pre(s_i)) = P_{i+1}(pre(s_i))$ and $P_i(suf(s_{i+1}))=P_{i+1}(suf(s_{i+1}))$. Therefore (\ref{eqn:rewritten}) becomes
	\begin{equation} 
		\begin{split}
			& \sum_{s_i}\sum_{pre(s_i)}P_i(pre(s_i))\sum_{s_{i+1}}\sum_{suf(s_{i+1})} P_i(suf(s_{i+1})) |P_i(s_{i+1}|s_i)\\
			& -P_{i+1}(s_{i+1}|s_i)|\leq \sum_{s_i}\sum_{pre(s_i)}P_i(pre(s_i))\sum_{s_{i+1}}\sum_{suf(s_i)}\\ &P_i(suf(s_{i+1}))\frac{\epsilon}{NTR_{max}D^2}
		\end{split}
	\end{equation}
	Note that $\sum_{s_i}\sum_{pre(s_i)}P_i(pre(s_i))=1$ since it sums over all prefix of $i$ steps and $\sum_{s_{i+1}}\sum_{suf(s_i)}P_i(suf(s_{i+1}))\leq N$ since $\sum_{suf(s_i)}P_i(suf(s_{i+1}))=1$ and the number of choice of $s_{i+1}$ after $s_i$ is bounded by $N$, which can be seen from the definition of product MDP. Therefore we have proven that (\ref{eqn:desired}) holds and thus the lemma is proved.
\end{proof}

\begin{lemma}\label{lemma:optimal}
	If $\bar{\mathcal{M}}$ is an $\frac{\epsilon}{NTR_{max}D^2}-$approximation of $\mathcal{M}$, and $\epsilon>0$, then for any LTL specification $\phi$, for T-cycle optimal policy $f$ and $g$ defined in $\mathcal{C}\in\mathcal{C}(
	\mathcal{P})$ for $\bar{\mathcal{P}}$ and $\mathcal{P}$, we have that $|J^{T,f}_{\mathcal{P}}(s,q)-J^{T,g}_{\mathcal{P}}(s,q)|\leq2\epsilon$, where $(s,q)=ent(\mathcal{C})$ and $J^T$ is the average cost in $T$ cycles.    
\end{lemma}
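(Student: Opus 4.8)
The plan is to derive the $2\epsilon$ optimality gap directly from the simulation-style bound of Lemma~\ref{lemma:main} combined with the optimality of each policy on its own model. The key observation is that Lemma~\ref{lemma:main} is stated for an \emph{arbitrary} $T$-cycle policy defined in $\mathcal{C}$ and does not require optimality, so it may be applied separately to both $f$ and $g$. Since both are proper $T$-cycle policies entering $\mathcal{C}$ at the common entrance $ent(\mathcal{C})=(s,q)$, I would first invoke Lemma~\ref{lemma:main} twice to obtain
\begin{equation*}
|J^{T,f}_{\mathcal{P}}(s,q)-J^{T,f}_{\bar{\mathcal{P}}}(s,q)|\leq\epsilon,\qquad |J^{T,g}_{\mathcal{P}}(s,q)-J^{T,g}_{\bar{\mathcal{P}}}(s,q)|\leq\epsilon .
\end{equation*}

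Next I would use the two optimality facts, keeping in mind that we are \emph{minimizing} ACPC. Because $g$ is optimal on $\mathcal{P}$, we have $J^{T,g}_{\mathcal{P}}(s,q)\leq J^{T,f}_{\mathcal{P}}(s,q)$, so the difference $J^{T,f}_{\mathcal{P}}-J^{T,g}_{\mathcal{P}}$ is already nonnegative and it suffices to bound it from above. Because $f$ is optimal on $\bar{\mathcal{P}}$, we have $J^{T,f}_{\bar{\mathcal{P}}}(s,q)\leq J^{T,g}_{\bar{\mathcal{P}}}(s,q)$. Chaining these with the two applications of Lemma~\ref{lemma:main} gives
\begin{equation*}
J^{T,f}_{\mathcal{P}}\leq J^{T,f}_{\bar{\mathcal{P}}}+\epsilon\leq J^{T,g}_{\bar{\mathcal{P}}}+\epsilon\leq J^{T,g}_{\mathcal{P}}+2\epsilon ,
\end{equation*}
where the outer inequalities are the simulation bounds and the middle one is optimality of $f$ on $\bar{\mathcal{P}}$. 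Together with nonnegativity this yields $0\leq J^{T,f}_{\mathcal{P}}(s,q)-J^{T,g}_{\mathcal{P}}(s,q)\leq 2\epsilon$, which is exactly the claimed absolute bound.

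The argument is short because all the genuine analytic content has already been discharged in Lemma~\ref{lemma:main}; there is no real obstacle in the sense of a hard estimate. The only points requiring care are bookkeeping: tracking which model each cost functional $J^{T,\cdot}_{\cdot}$ is evaluated on, and getting the \emph{direction} of the two optimality inequalities correct for a minimization problem rather than a maximization. I would also explicitly note, as the one substantive check, that Lemma~\ref{lemma:main} is legitimately applicable to \emph{both} $f$ and $g$—this holds precisely because that lemma quantifies over an arbitrary proper $T$-cycle policy and both policies enter $\mathcal{C}$ at the same state $ent(\mathcal{C})$, so the two cost functionals are compared at an identical initial condition.
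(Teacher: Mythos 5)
Your proposal is correct and follows essentially the same route as the paper: the paper's own (very terse) proof cites exactly the two ingredients you use, namely Lemma~\ref{lemma:main} applied to the policies in question and the optimality inequality $J^{T,f}_{\bar{\mathcal{P}}}(s,q)\leq J^{T,g}_{\bar{\mathcal{P}}}(s,q)$. Your write-up simply makes explicit the chaining $J^{T,f}_{\mathcal{P}}\leq J^{T,f}_{\bar{\mathcal{P}}}+\epsilon\leq J^{T,g}_{\bar{\mathcal{P}}}+\epsilon\leq J^{T,g}_{\mathcal{P}}+2\epsilon$ together with nonnegativity from the optimality of $g$ on $\mathcal{P}$, which is what the paper leaves implicit.
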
 
\begin{proof}
	This is the direct result from Lemma \ref{lemma:main} and the fact that $J^{T,f}_{\bar{\mathcal{P}}}(s,q)\leq J^{T,g}_{\bar{\mathcal{P}}}(s,q)$.
\end{proof}

Lemma \ref{lemma:optimal} is analogous to Lemma 2 in \cite{Fu-RSS-14} and intuitively bounds the performance of the optimal $T$ cycle policy in the estimated model when applied to the ground truth model. The finite $T$ cycle is chosen such that the average cost over $T$ cycle for the optimal policy $g$ satisfies the following definition.
\begin{figure}[!t]
	\removelatexerror
	\begin{algorithm}[H]
		\SetKwData{Left}{left}\SetKwData{This}{this}\SetKwData{Up}{up}
		\SetKwFunction{Union}{Union}\SetKwFunction{FindCompress}{FindCompress}
		\SetKwInOut{Input}{input}\SetKwInOut{Output}{output}
		\Input{  The state and action sets $S$ and $A$, the labeling function $L$ and the underlying transition system $TS_\mathcal{M}$ for the MDP $\mathcal{M}$.  The specification DRA $R_\phi$, $\epsilon,\delta$, (estimated) mixing cycle $T$.}
		\Output{policy $f:S\times Q\rightarrow A$}
		\BlankLine
		\nl Obtain the AMEC set $\mathcal{C}(\mathcal{P})$ with only the transition probabilities unknown and $f_{\rightarrow\mathcal{C}}$ for each $\mathcal{C}\in\mathcal{C}(\mathcal{P})$ from the product $TS_{\mathcal{M}}\times R_\phi$;\
		
		
		
		\nl	\For{$\mathcal{C}\in\mathcal{C}(\mathcal{P})$}{
			\nl $(s,q)=(s_0,q_0)$\;
			\nl apply $f_{\rightarrow\mathcal{C}}$ until reach $ent(\mathcal{C})$\;
			\nl $\bar{\mathcal{C}}$ $\leftarrow$ ModelLearning $(\mathcal{C})$\;
			\nl $f_{\circlearrowright\bar{\mathcal{C}}}\leftarrow$ optimal $T$-cycle policy\;
			\nl $J_{\bar{\mathcal{C}}}=J_{\bar{\mathcal{C}}}^{f,T}(ent(\bar{\mathcal{C}}))$\;	}

		\nl $\mathcal{C}^*=\arg\min_{\mathcal{\bar{C}}}{J_{\bar{\mathcal{C}}}}$\;
		\nl $f^*_{\rightarrow} = f_{\rightarrow{\mathcal{C}^*}}$\;
		\nl $f^*_{\circlearrowright} = f_{\circlearrowright{\mathcal{C}^*}}$\;
		\nl \Return {$f =[f^*_{\rightarrow},f^*_{\circlearrowright} ] $}\;
		
		~\caption{ModelLearningAndPolicyFinding}\label{alg:learning}			
	\end{algorithm}
\end{figure}
\begin{definition}
	Given the optimal policy $g$ defined in $\mathcal{C}$ over infinite time horizon, the $\epsilon$-mixing cycle  is defined to be the smallest positive integer $T_\mathcal{C}$ such that
	$$
	J^{g,T_\mathcal{C}}_{\mathcal{P},\mathcal{C}}(s,q)-J^{g}_{\mathcal{P},\mathcal{C}}(s,q)<\epsilon
	$$ 
	for $(s,q)=ent(\mathcal{C})$ where $J^{g,T_\mathcal{C}}_{\mathcal{P},\mathcal{C}}(s,q)
	$ is the average cost per cycle under policy $g$ from $(s,q)$ runs for $T$ cycles
	and $J^{g}_{\mathcal{P},\mathcal{C}}(s,q)$ is the optimal ACPC. The $\epsilon$-mixing cycle $T$ for the MDP is defined to be $T=\max_{\mathcal{C}\in\mathcal{C}(\mathcal{P})}\{T_\mathcal{C}\}$.
\end{definition}

From the above discussion, it can be seen that it is essential to obtain an estimated MDP that is statistically close to the ground truth one. To this regard, we need to estimate the transition probabilities of the MDP. We follow the results in \cite{Fu-RSS-14}, when the number of sampling is large enough, the maximum likelihood estimator of the transition probability $P(s,a,s')$ can be seen as a random variable of normal distribution with mean $\mu = \frac{\text{count}(s,a,s')}{\text{count}(s,a)}$ and variance $\sigma^2 = \frac{\text{count}(s,a,s')(\text{count}(s,a)-\text{count}(s,a,s'))}{\text{count}(s,a)^2(\text{count}(s,a)+1)}$, where $\text{count}(s,a)$ is the total number of times that action $a$ is executed on $s$ and $\text{count}(s,a,s')$ is the total number of times that the transition $s,a,s'$ is observed.

Then we use the idea of \emph{known states} \cite{brafman2002r,Fu-RSS-14,kearns2002near} as defined below. 

\begin{definition}\label{def:known}
	Given an MDP $\mathcal{M}$ and LTL specification $\phi$, a probabilistic transition $((s,q),a,(s',q'))$ from $\mathcal{P}=\mathcal{M}\times\mathcal{R}_\phi$ is known if with probability at least $1-\delta$, for any $(s,q')$ that is reachable by executing $a$, $\sigma^2 k\leq\frac{\epsilon}{NTR_{max}D^2}$ where $\sigma^2$ is the variance of the transition probability estimator, $k$ is the critical value for $1-\delta$ confidence interval, $\epsilon>0$ is a constant, $N=|S|$, $T$ is the $\epsilon$-mixing cycle and $L$ is the longest step required to finish one cycle. A state $(q,s)$ is known if any action $a$ defined on it, $((s,q),a,(s',q'))$ is known for any $(s',q')$. 
\end{definition}

From Definition \ref{def:known}, if all the states of the MDP are known, then with probability no less than $1-\delta$, it is an $\frac{\epsilon}{NTR_{max}D^2}$-approximation of the true underling MDP. In fact, since we are only interested in the behavior inside the AMECs, it is sufficient that all the states in the AMECs are known. Then we have the following algorithm and theorem to shown that we can learn such model efficiently.

\begin{theorem}
	Given an MDP model $\mathcal{M}$ and LTL specification $\phi$ in the form of (\ref{eqn:LTL}) and  satisfy the Assumption \ref{assump:structure}, \ref{assump:loop}, \ref{assump:AMEC} and \ref{assump:LTL}. Given $0<\delta<1$, $\epsilon>0$, $T$ is the $\epsilon$-mixing cycle. With probability no less than $1-\delta$, Algorithm \ref{alg:learning} will learn an estimated MDP model with all states in AMECs known within number of steps polynomial in $|S|,|A|,T,\frac{1}{\epsilon},\frac{1}{\delta}$ and linear in the number of accepting pairs. The $T$-cycle optimal policy $f :S\times Q\rightarrow A
	$ found on this estimated model will satisfy 
	$$
	J^{f,T}_{\mathcal{P}}(s,q)-J^{*}_{\mathcal{P}}(s,q)< 3\epsilon 
	$$ for $s=ent(\mathcal{C})$ and $\mathcal{C}$ incurs the minimum of $T$-cycle average cost.
\end{theorem}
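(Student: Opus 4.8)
The plan is to treat the statement as two claims --- a sample-complexity bound ensuring Algorithm \ref{alg:learning} makes every state inside the AMECs \emph{known} in the sense of Definition \ref{def:known} within polynomially many steps, and a near-optimality bound placing the resulting $T$-cycle policy within $3\epsilon$ of the true optimal ACPC. The second claim is essentially bookkeeping on top of the lemmas already established, so I would settle it first and reserve the real effort for the exploration argument.

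For the near-optimality bound, let $f$ be the $T$-cycle optimal policy computed on the learned model $\bar{\mathcal{P}}$ (the algorithm's output), $g$ the $T$-cycle optimal policy on the true model $\mathcal{P}$, and $g^{\ast}$ the infinite-horizon optimal policy realizing $J^{\ast}_{\mathcal{P}}(s,q)$. Once the known-state guarantee holds, $\bar{\mathcal{M}}$ is (with probability at least $1-\delta$) an $\frac{\epsilon}{NTR_{max}D^2}$-approximation of $\mathcal{M}$ on the AMEC, so Lemma \ref{lemma:optimal} gives $J^{f,T}_{\mathcal{P}}(s,q)-J^{g,T}_{\mathcal{P}}(s,q)\le 2\epsilon$. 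Since $g$ is $T$-cycle optimal on $\mathcal{P}$ it is no worse than $g^{\ast}$ over $T$ cycles, i.e. $J^{g,T}_{\mathcal{P}}(s,q)\le J^{g^{\ast},T}_{\mathcal{P}}(s,q)$, and applying the $\epsilon$-mixing cycle definition to $g^{\ast}$ gives $J^{g^{\ast},T}_{\mathcal{P}}(s,q)-J^{\ast}_{\mathcal{P}}(s,q)<\epsilon$. Chaining these three inequalities yields $J^{f,T}_{\mathcal{P}}(s,q)-J^{\ast}_{\mathcal{P}}(s,q)<3\epsilon$, and the minimization step (line 8 of Algorithm \ref{alg:learning}), together with the uniqueness of each entrance $ent(\mathcal{C})$ from Assumption \ref{assump:AMEC}, ensures this is measured against the globally cheapest AMEC.

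For the sample complexity I would first fix one probabilistic transition and count the observations that make it known. Because the estimator variance obeys $\sigma^2\le\frac{1}{4(\mathrm{count}(s,a)+1)}$ and decays like $1/\mathrm{count}(s,a)$, enforcing $\sigma^2 k\le\frac{\epsilon}{NTR_{max}D^2}$ needs only $m=O\!\big(\frac{k\,NTR_{max}D^2}{\epsilon}\big)$ samples, polynomial in $|S|,T,\tfrac{1}{\epsilon}$ and in the critical value $k$ (which depends on $\delta$ only through a Gaussian tail and hence contributes at worst a $\log\tfrac1\delta$ factor). I would then bound the number of \emph{steps} needed to gather $m$ samples at every state-action pair of every AMEC, where the structural assumptions carry the argument: Assumption \ref{assump:AMEC} guarantees each $\mathcal{C}$ is reached with probability one under $f_{\rightarrow\mathcal{C}}$; Assumption \ref{assump:LTL} (unichain) with the communicating property of an AMEC forces any memoryless policy to revisit every recurrent state; and Assumption \ref{assump:loop} caps each cycle length by $L$, so the expected return time to any state is polynomially bounded. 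A standard PAC-MDP exploration argument then closes it: at every stage the learner is either in a fully known region or faces an unknown transition, and each visit to an unknown transition advances one of at most $|S||Q||A|$ counters toward its threshold $m$; multiplying the counter count, $m$, and the polynomial return-time bound gives a total step count polynomial in $|S|,|A|,T,\tfrac1\epsilon,\tfrac1\delta$, and linear in $|ACC_{\mathcal{P}}|$ since AMECs of distinct acceptance pairs are disjoint and learned separately.

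The step I expect to be the main obstacle is this exploration bound, not the near-optimality chaining. Two points require care. First, the assumptions must be leveraged to upgrade ``each recurrent state is visited infinitely often'' into ``each unknown transition is sampled within polynomially many steps in expectation,'' and then into a high-probability statement via concentration so the cumulative failure probability stays below $\delta$. Second, the per-transition confidence must be tightened to roughly $\delta/(|S||Q||A|)$ so that a union bound over all transitions still delivers overall confidence $1-\delta$; this feedback raises $k$ only to $O(\sqrt{\log(|S||Q||A|/\delta)})$, keeping both $k$ and the resulting sample count within the claimed polynomial budget.
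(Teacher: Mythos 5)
Your proposal is correct and follows essentially the same route as the paper's proof: the same decomposition into a PAC-style exploration bound (Chernoff/concentration per transition, coverage via the communicating AMEC structure) plus the near-optimality chain combining Lemma \ref{lemma:optimal}'s $2\epsilon$ bound with the $\epsilon$-mixing-cycle definition to get $3\epsilon$, followed by the minimization over AMECs. Your treatment is in fact somewhat more careful than the paper's---distinguishing the $T$-cycle optimal policy $g$ from the infinite-horizon optimal $g^{\ast}$ in the chaining, and making the union bound over transitions explicit---but these are refinements of, not departures from, the paper's argument.
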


\begin{proof}
	The upper bound on the number of visits to make a state known is polynomial in $|A|,T,\frac{1}{\epsilon},\frac{1}{\delta}$ from Chernoff bound. Before all states in one $c\in\mathcal{C}$ are known, the exploration policy makes sure that every state in $c$ will be sufficiently visited because $\mathcal{C}$ is a communicating MDP. Once all the $\mathcal{C}\in\mathcal{C}(\mathcal{P})$ are known which takes the number of steps polynomial in $|S|,|A|,T,\frac{1}{\epsilon},\frac{1}{\delta}$ and linear in the number of accepting pairs, for every $\mathcal{C}$, the optimal $T$ cycle policy $f_\mathcal{C}$  satisfies 
	$$
	|J^{f_\mathcal{C},T}_{\mathcal{P},\mathcal{C}}(s,q)-J^{g,T}_{\mathcal{P},\mathcal{C}}(s,q)|\leq2\epsilon	
	$$ from Lemma \ref{lemma:optimal}. From the definition of $\epsilon-$mixing time, we know that $J^{g,T}_{\mathcal{P},\mathcal{C}}(s,q)-J^*_{\mathcal{P},\mathcal{C}}(s,q)<\epsilon$. Therefore, 
	$$
	J^{f_\mathcal{C},T}_{\mathcal{P},\mathcal{C}}(s,q) - J^*_{\mathcal{P},\mathcal{C}}(s,q)<3\epsilon
	$$
	Since we select the AMEC with the smallest average cost, that is $J^{f,T}_{\mathcal{P}}(s,q)=\min_{\mathcal{C}\in\mathcal{C}(\mathcal{P})}{J^{f_\mathcal{C},T}_{\mathcal{P},\mathcal{C}}(s,q)}$, therefore $J^{f,T}_{\mathcal{P},\mathcal{C}}(s,q) - J^*_{\mathcal{P},\mathcal{C}}(s,q)<3\epsilon$ for all $\mathcal{C}\in\mathcal{C}(\mathcal{P})$, then we have that 
	$$
	J^{f,T}_{\mathcal{P}}(s,q)-J^{*}_{\mathcal{P}}(s,q)< 3\epsilon 
	$$	
\end{proof}

Note that $f_{\rightarrow\mathcal{C}}$ can be computed using standard value iteration \cite{rutten2004mathematical}. The sub-algorithm ModelLearning($\mathcal{C}$) in Algorithm \ref{alg:learning} can be flexible as long as it can explore all the state transitions in $\mathcal{C}$ sufficiently well. For example, for each state $(s,q)\in\mathcal{C}$, we could select the action in a round-robin fashion to make sure every state and action pair is sampled. In Section \ref{sec:simulation}, we solve the exploration and exploitation by following the  strategy  that for any unknown states, we select the action in a round-robin fashion while for states that are already known, we follow the strategy based on the current model that maximizes the probability to reach unknown states which can be efficiently computed by standard value iteration \cite{bertsekas1995dynamic}. As to obtain the optimal stationary $T$-cycle policy, we utilize the policy evaluation method in \cite{ding2014optimal} and find the policy that incurs the optimized $T-$cycle ACPC. 

\section{Example}\label{sec:simulation}
We illustrate the validity of our framework in a case study of a human robot collaboration in an  assembly scenario. The simulation is run in Matlab on a laptop with Intel i7 processor with 2.6GHz  and 16GB of memory. 

\begin{table*}[t]
	\centering
	\begin{tabular}{|l|l|l|l|l|l|l|l|l|l|l|l|l|}
		\hline
		$f_0^r$ & $f_1^r$  & $f_0^h$ & $f_1^h$ & $f_2^h$& $f_{repair}$ & $t_0^r$ & $t_1^r$& $t_{1,0}^r$ & $t_{1,1}^r$\\ 
		\hline
		0.5 & 0.4 &  0.5 & 0.4 & 0.45 & 0.4 & 0.5 & 0.4 & 0.3 & 0.4 \\
		\hline
		
	\end{tabular}
	\caption{Transition probabilities}
	\label{table:transition}
\end{table*}

For the underlying true model to be learned, the task model is as shown in Figure \ref{fig:spec}. The robot model is as shown in Figure \ref{fig:robot} with $p_0=0.6,p_1=0.65$. The fatigue model and trust model are as shown in Figure \ref{fig:fatigue} and Figure \ref{fig:trust}. Their corresponding transition probabilities are as shown in Table \ref{table:transition}. For the fatigue model, it can be seen that for each human action $a_i^h$ and $repair$, there are probabilities $f_i^h$ and $f_{repair}$ for the human to stay at the same fatigue level and $1-f_i^h,1-f_{repair}$ to move to a higher fatigue level if possible. If the human is already at the highest fatigue level, any action involves human wouldn't make any change. If the robot is selected to perform actions $a_i^r$, then the human would be idle. There is $f^i_r$ probability for the human to stay at the same fatigue level and $1-f^i_r$ probability to decrease to the lower fatigue level. When the human is at the lowest fatigue level, any idleness wouldn't change the fatigue level.

As shown in the human trust model in Figure \ref{fig:trust}, human action would not affect human trust to the robots. Any robot action would have certain probability for the human to stay at the same trust level,  and certain probability to transfer to adjacent trust level when applicable. The transition probabilities imply the robot's performance  as there is higher chance to increase human trust if the robot performs it well. 

\begin{figure}[h]
	\centering	
	\begin{tikzpicture}[shorten >=1pt,node distance=4cm,on grid,auto, thick,scale=.75, every node/.style={transform shape}]
	\node[state,initial] (f_0)   {$f_0$};
	\node[state] (f_1) [right=of f_0] {$f_1$};
	\node[state] (f_2) [right=of f_1] {$f_2$};
	
	\path[->]
	(f_0) edge [pos=0.5, loop, above=0.1] node [align=center] {$a_i^r,1$\\$a_i^h,f_i^h$\\$repair,f_{repair}$} (f_0)
	(f_0) edge [pos=0.5, bend left, above=0.5] node [align=center] {$a_i^h,1-f_i^h$\\$repair,1-f_{repair}$} (f_1)
	(f_1) edge [pos=0.5, loop, above=0.1] node [align=center] {$a_i^r,f_i^r$\\$a_i^h,f_i^h$\\$repair,f_{repair}$} (f_1)
	(f_1) edge [pos=0.5, bend left, below=0.5] node [align=center] {$a_i^r,1-f_i^r$} (f_0)
	(f_1) edge [pos=0.5, bend left, above=0.5] node [align=center] {$a_i^h,1-f_i^h$\\$repair,1-f_{repair}$} (f_2)
	(f_2) edge [pos=0.5, loop, above=0.1] node [align=center] {$a_i^r,f_i^r$\\$a_i^h,1$\\$repair,1$} (f_2)
	(f_2) edge [pos=0.5, bend left, below=0.5] node [align=center] {$a_i^r,1-f_i^r$} (f_1)
	;
	
	\end{tikzpicture}
	\caption{Human fatigue model}
	\label{fig:fatigue}
\end{figure}
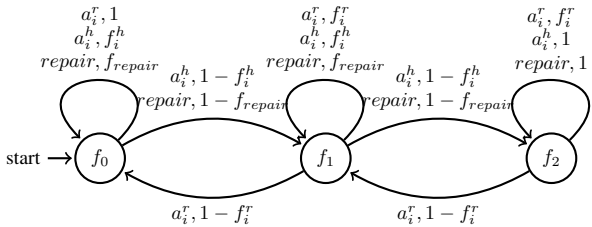

\begin{figure}[h]
	\centering	
	\begin{tikzpicture}[shorten >=1pt,node distance=4cm,on grid,auto, thick,scale=0.75, every node/.style={transform shape}]
	\node[state,initial] (t_0)   {$t_0$};
	\node[state] (t_1) [right=of t_0] {$t_1$};
	\node[state] (t_2) [right=of t_1] {$t_2$};
	
	\path[->]
	(t_0) edge [pos=0.5, loop, above=0.1] node [align=center] {$a_i^r,t_i^r$\\$repair,1$} (t_0)
	(t_0) edge [pos=0.5, bend left, above=0.5] node [align=center] {$a_i^r,1-t_i^r$} (t_1)
	(t_1) edge [pos=0.5, loop, above=0.1] node [align=center] {$a_i^r,t_{1,i}^r$} (t_1)
	(t_1) edge [pos=0.5, bend left, below=0.5] node [align=center] {$a_i^r,(1-t_{1,i}^r)/2$\\$repair,1$} (t_0)
	(t_1) edge [pos=0.5, bend left, above=0.5] node [align=center] {$a_i^r,(1-t_{1,i}^r)/2$} (t_2)
	(t_2) edge [pos=0.5, loop, above=0.1] node [align=center] {$a_i^r,t_i^r$} (t_2)
	(t_2) edge [pos=0.5, bend left, below=0.5] node [align=center] {$a_i^r,1-t_i^r$\\$repair,1$} (t_1)
	;
	
	\end{tikzpicture}
	\caption{Human trust model}
	\label{fig:trust}
\end{figure}
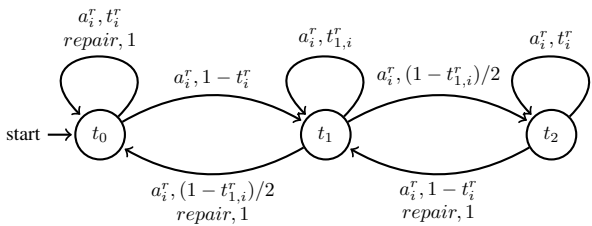

The four models are created using PRISM model checker \cite{kwiatkowska2011prism} which also produces the composed system model $\mathcal{M}$. As for the cost, we assume it is unknown to the learner initially and is learned later. For the task model, all actions in the task model would incur a cost of $0.7$. For the robot model, robot actions $a_0^r$ and $a_1^r$ incur $0.003$ and $0.07$ respectively, the cost of repair is $0.07$. The cost for human fatigue and trust models are as shown in Table \ref{table:cost} where the empty entries mean not applicable. Essentially we penalize the following cases by assigning a larger cost 1) human is idle and the fatigue level is low; 2) human is busy and the fatigue level is high; 3) human trust is low; 4) lost of trust due to robot failure. On the other hand, we reward the following cases 1) human is busy when fatigue level is low; 2) human is idle when fatigue level is high; 3) human is at the medium fatigue level 4) When human trust is high. 
\begin{table}[t]
	\centering
	\begin{tabular}{|l|l|l|l|l|l|l|}
		\hline
		& $f_0$ & $f_1$ & $f_2$ & $t_0$ & $t_1$ & $t_2$\\ 
		\hline
		$a_i^r$ & 0.3 & 0.1  & 0.03 & 0.3 & 0.17 & 0.03  \\
		\hline
		$a_i^h$ & 0.03 & 0.1 & 0.3 &  & &   \\
		\hline
		$repair$& 0.03  & 0.1 & 0.3   & 0.17 & 0.17 & 0.5  \\
		
		\hline
		
	\end{tabular}
	\caption{Costs associated to each state and action }
	\label{table:cost}
\end{table}

The action set for composed $\mathcal{M}$ is $\{a_0^r,a_1^r,a_0^h,$ $a_1^h,a_2^h,repair\}$. The total number of states in this example is $54$. The states refer to the tuple $(s^w,s^r,s^t,s^f)$ where each element in the tuple is from task automaton, robot model, human fatigue model and human trust model respectively.

We label $S_{normal}=\{*,0,*,*\}$ as $normal$, for all the states when the robot is in the normal state. Additionally, we label  $S_{faulty}=\{*,1,*,*\}$  as $faulty$ for the all the states when the robot is in the faulty state. Our LTL specification is as the following
\begin{equation}\label{eqn:LTL1}
	\phi_1=\square\diamondsuit\pi\wedge  \square (faulty \rightarrow  X normal)
\end{equation}
In plain words, it requires the composed MDP $\mathcal{M}$ to visit the finish state in $S_\pi$ infinitely often and whenever there is fault, it will be fixed in the next step, since leaving robot in faulty state for too long may permanently damage it. The corresponding DRA has $9$ states and one acceptance pair. Therefore the product model has $486$ states.

For this example, we set $\epsilon=0.35,T=10, R_{max}=1$ and $90\%$ confidence level. Even if the true model is unknown before learning, since the underline transition system is known, we can find that the maximum step to finish a cycle is $D=5$ and calculate the AMEC. There is one AMEC with $75$ states in the product model which includes the initial state $(q_0,s_0)$, therefore there is no need to find $f_{\rightarrow\mathcal{C}}$. The model learning took  $206$ seconds and $812872$ cycles. Then we obtain the $T$-cycle optimal policy by evaluating the $T$-cycle ACPC with formula (21) in from \cite{ding2014optimal} for each possible deterministic and memoryless policy. The optimal $T-$cycle ACPC we obtain is $1.134$ while the optimal ACPC with infinite time horizon is $1.128$. The $T-$cycle optimal policy we found is almost identical to the one for unbounded case with only one different action choice. It can be seen that the performance bound we have is quite conservative due to the loose upper bound.

	\section{Conclusion}
	\label{sec: conclusion}
	In this paper we proposed a learning based dynamic task assignment framework for a class of human robot interactive systems that human and robot work closely to finish a given mission repetitively. Due to the uncertainties in robot dynamics and the evolution of human cognitive and physiological states, we model the robot status, human trust and fatigue as MDPs respectively. A specification is given to ensure that the mission can be finished infinitely often with the optimal probability and additional constraints can be added as necessary. The abstract model of the human robot system is a composed MDP from the task model, robot model, human fatigue and trust models. The task assignment problem is then converted to an optimal controller synthesis problem. When the system model is unknown, we proposed a PAC-learning inspired algorithm to efficiently learn the model and then find the optimal policy. An example has been explored to show the validity of our framework. Future work includes extending the problem to more general settings, such as two-player stochastic games to consider the case where the human action is not controllable and partial observability where the human states such as trust and fatigue cannot be directly observed. 
	
	
	\bibliographystyle{IEEEtran}
	\bibliography{ref}

\begin{thebibliography}{10}
\providecommand{\url}[1]{#1}
\csname url@samestyle\endcsname
\providecommand{\newblock}{\relax}
\providecommand{\bibinfo}[2]{#2}
\providecommand{\BIBentrySTDinterwordspacing}{\spaceskip=0pt\relax}
\providecommand{\BIBentryALTinterwordstretchfactor}{4}
\providecommand{\BIBentryALTinterwordspacing}{\spaceskip=\fontdimen2\font plus
\BIBentryALTinterwordstretchfactor\fontdimen3\font minus
  \fontdimen4\font\relax}
\providecommand{\BIBforeignlanguage}[2]{{%
\expandafter\ifx\csname l@#1\endcsname\relax
\typeout{** WARNING: IEEEtran.bst: No hyphenation pattern has been}%
\typeout{** loaded for the language `#1'. Using the pattern for}%
\typeout{** the default language instead.}%
\else
\language=\csname l@#1\endcsname
\fi
#2}}
\providecommand{\BIBdecl}{\relax}
\BIBdecl

\bibitem{sadrfaridpour2016modeling}
B.~Sadrfaridpour, H.~Saeidi, J.~Burke, K.~Madathil, and Y.~Wang, ``Modeling and
  control of trust in human-robot collaborative manufacturing,'' in
  \emph{Robust Intelligence and Trust in Autonomous Systems}.\hskip 1em plus
  0.5em minus 0.4em\relax Springer, 2016, pp. 115--141.

\bibitem{chen2014optimal}
F.~Chen, K.~Sekiyama, F.~Cannella, and T.~Fukuda, ``Optimal subtask allocation
  for human and robot collaboration within hybrid assembly system,'' \emph{IEEE
  Transactions on Automation Science and Engineering}, vol.~11, no.~4, pp.
  1065--1075, 2014.

\bibitem{broadbent2009acceptance}
E.~Broadbent, R.~Stafford, and B.~MacDonald, ``Acceptance of healthcare robots
  for the older population: review and future directions,'' \emph{International
  journal of social robotics}, vol.~1, no.~4, pp. 319--330, 2009.

\bibitem{okamura2010medical}
A.~M. Okamura, M.~J. Mataric, and H.~I. Christensen, ``Medical and health-care
  robotics,'' \emph{IEEE Robotics \& Automation Magazine}, vol.~17, no.~3, pp.
  26--37, 2010.

\bibitem{lin2015experiments}
C.-W. Lin, M.-H. Khong, and Y.-C. Liu, ``Experiments on human-in-the-loop
  coordination for multirobot system with task abstraction,'' \emph{IEEE
  Transactions on Automation Science and Engineering}, vol.~12, no.~3, pp.
  981--989, 2015.

\bibitem{seshia2015formal}
S.~A. Seshia, D.~Sadigh, and S.~S. Sastry, ``Formal methods for semi-autonomous
  driving,'' in \emph{Proceedings of the 52nd Annual Design Automation
  Conference}.\hskip 1em plus 0.5em minus 0.4em\relax ACM, 2015, p. 148.

\bibitem{zanchettin2016safety}
A.~M. Zanchettin, N.~M. Ceriani, P.~Rocco, H.~Ding, and B.~Matthias, ``Safety
  in human-robot collaborative manufacturing environments: Metrics and
  control,'' \emph{IEEE Transactions on Automation Science and Engineering},
  vol.~13, no.~2, pp. 882--893, 2016.

\bibitem{baier2008principles}
C.~Baier and J.~Katoen, \emph{Principles of Model Checking}.\hskip 1em plus
  0.5em minus 0.4em\relax MIT Press, 2008.

\bibitem{bauer2008human}
A.~Bauer, D.~Wollherr, and M.~Buss, ``Human--robot collaboration: a survey,''
  \emph{International Journal of Humanoid Robotics}, vol.~5, no.~01, pp.
  47--66, 2008.

\bibitem{hancock2011meta}
P.~A. Hancock, D.~R. Billings, K.~E. Schaefer, J.~Y. Chen, E.~J. De~Visser, and
  R.~Parasuraman, ``A meta-analysis of factors affecting trust in human-robot
  interaction,'' \emph{Human Factors: The Journal of the Human Factors and
  Ergonomics Society}, vol.~53, no.~5, pp. 517--527, 2011.

\bibitem{desai2012effects}
M.~Desai, M.~Medvedev, M.~V{\'a}zquez, S.~McSheehy, S.~Gadea-Omelchenko,
  C.~Bruggeman, A.~Steinfeld, and H.~Yanco, ``Effects of changing reliability
  on trust of robot systems,'' in \emph{Human-Robot Interaction (HRI), 2012 7th
  ACM/IEEE International Conference on}.\hskip 1em plus 0.5em minus 0.4em\relax
  IEEE, 2012, pp. 73--80.

\bibitem{robinette2015effect}
P.~Robinette, A.~R. Wagner, and A.~M. Howard, ``The effect of robot performance
  on human--robot trust in time--critical situations,'' 2015.

\bibitem{lee1992trust}
J.~Lee and N.~Moray, ``Trust, control strategies and allocation of function in
  human-machine systems,'' \emph{Ergonomics}, vol.~35, no.~10, pp. 1243--1270,
  1992.

\bibitem{desai2013impact}
M.~Desai, P.~Kaniarasu, M.~Medvedev, A.~Steinfeld, and H.~Yanco, ``Impact of
  robot failures and feedback on real-time trust,'' in \emph{Proceedings of the
  8th ACM/IEEE international conference on Human-robot interaction}.\hskip 1em
  plus 0.5em minus 0.4em\relax IEEE Press, 2013, pp. 251--258.

\bibitem{wang2015mutual}
X.~Wang, Z.~Shi, F.~Zhang, and Y.~Wang, ``Mutual trust based scheduling for
  (semi) autonomous multi-agent systems,'' in \emph{2015 American Control
  Conference (ACC)}.\hskip 1em plus 0.5em minus 0.4em\relax IEEE, 2015, pp.
  459--464.

\bibitem{Fu-RSS-14}
J.~Fu and U.~Topcu, ``Probably approximately correct mdp learning and control
  with temporal logic constraints,'' in \emph{Proceedings of Robotics: Science
  and Systems}, Berkeley, USA, July 2014.

\bibitem{wolff2012optimal}
E.~M. Wolff, U.~Topcu, and R.~M. Murray, ``Optimal control with weighted
  average costs and temporal logic specifications,'' in \emph{Robotics: Science
  and Systems}, 2012.

\bibitem{ding2014optimal}
X.~Ding, S.~L. Smith, C.~Belta, and D.~Rus, ``Optimal control of markov
  decision processes with linear temporal logic constraints,'' \emph{Automatic
  Control, IEEE Transactions on}, vol.~59, no.~5, pp. 1244--1257, 2014.

\bibitem{svorenova2013optimal}
M.~Svorenova, I.~Cerna, and C.~Belta, ``Optimal control of mdps with temporal
  logic constraints,'' in \emph{Decision and Control (CDC), 2013 IEEE 52nd
  Annual Conference on}.\hskip 1em plus 0.5em minus 0.4em\relax IEEE, 2013, pp.
  3938--3943.

\bibitem{sadigh2014learning}
D.~Sadigh, E.~S. Kim, S.~Coogan, S.~S. Sastry, and S.~A. Seshia, ``A learning
  based approach to control synthesis of markov decision processes for linear
  temporal logic specifications,'' in \emph{Decision and Control (CDC), 2014
  IEEE 53rd Annual Conference on}.\hskip 1em plus 0.5em minus 0.4em\relax IEEE,
  2014, pp. 1091--1096.

\bibitem{ji2006probabilistic}
Q.~Ji, P.~Lan, and C.~Looney, ``A probabilistic framework for modeling and
  real-time monitoring human fatigue,'' \emph{IEEE Transactions on systems,
  man, and cybernetics-Part A: Systems and humans}, vol.~36, no.~5, pp.
  862--875, 2006.

\bibitem{fujieSharedAutonomy}
J.~Fu and U.~Topcu, ``Synthesis of shared autonomy policies with temporal logic
  specifications,'' \emph{IEEE Transactions on Automation Science and
  Engineering}, vol.~13, no.~1, pp. 7--17, Jan 2016.

\bibitem{ACC2017}
\BIBentryALTinterwordspacing
B.~Wu, B.~Hu, and H.~Lin, ``Toward efficient manufaturing systems: a trust
  based human robot collaboration,'' in \emph{American Control Conference
  (ACC), 2015}.\hskip 1em plus 0.5em minus 0.4em\relax IEEE, 2017, pp.
  1536--1541. [Online]. Available:
  \url{http://www3.nd.edu/~bwu3/doc/ACC2017.pdf}
\BIBentrySTDinterwordspacing

\bibitem{puterman2014markov}
M.~L. Puterman, \emph{Markov decision processes: discrete stochastic dynamic
  programming}.\hskip 1em plus 0.5em minus 0.4em\relax John Wiley \& Sons,
  2014.

\bibitem{muir1990operators}
B.~M. Muir, \emph{Operators' trust in and use of automatic controllers in a
  supervisory process control task}.\hskip 1em plus 0.5em minus 0.4em\relax
  University of Toronto, 1990.

\bibitem{abbeel2004apprenticeship}
P.~Abbeel and A.~Y. Ng, ``Apprenticeship learning via inverse reinforcement
  learning,'' in \emph{Proceedings of the twenty-first international conference
  on Machine learning}.\hskip 1em plus 0.5em minus 0.4em\relax ACM, 2004, p.~1.

\bibitem{brafman2002r}
R.~I. Brafman and M.~Tennenholtz, ``R-max-a general polynomial time algorithm
  for near-optimal reinforcement learning,'' \emph{Journal of Machine Learning
  Research}, vol.~3, no. Oct, pp. 213--231, 2002.

\bibitem{kearns2002near}
M.~Kearns and S.~Singh, ``Near-optimal reinforcement learning in polynomial
  time,'' \emph{Machine Learning}, vol.~49, no. 2-3, pp. 209--232, 2002.

\bibitem{rutten2004mathematical}
J.~J. Rutten, M.~Kwiatkowska, G.~Norman, and D.~Parker, \emph{Mathematical
  techniques for analyzing concurrent and probabilistic systems}.\hskip 1em
  plus 0.5em minus 0.4em\relax American Mathematical Soc., 2004.

\bibitem{bertsekas1995dynamic}
D.~P. Bertsekas, \emph{Dynamic Programming and Optimal Control}, 4th~ed.\hskip
  1em plus 0.5em minus 0.4em\relax Athena Scientific, 2012, vol.~2.

\bibitem{kwiatkowska2011prism}
M.~Kwiatkowska, G.~Norman, and D.~Parker, ``Prism 4.0: Verification of
  probabilistic real-time systems,'' in \emph{International Conference on
  Computer Aided Verification}.\hskip 1em plus 0.5em minus 0.4em\relax
  Springer, 2011, pp. 585--591.

\end{thebibliography}
\end{document}